\documentclass[letterpaper]{article}

\usepackage{aaai2027}
\usepackage{natbib}
\usepackage{url}            
\usepackage{booktabs}       
\usepackage{amsfonts}       
\usepackage{nicefrac}       
\usepackage{microtype}      
\usepackage{subcaption}
\usepackage{algorithm}
\usepackage{algorithmicx}
\usepackage{tabularx}
\usepackage{amsmath}
\usepackage{amssymb}
\usepackage{mathtools}
\usepackage{amsthm}
\usepackage{algpseudocode}
\usepackage{graphicx}
\usepackage[table]{xcolor}
\usepackage{tcolorbox}
\usepackage{multirow}
\usepackage{longtable}
\usepackage{enumitem}
\usepackage{nameref}
\theoremstyle{plain}
\newtheorem{theorem}{Theorem}
\newtheorem{proposition}[theorem]{Proposition}

\theoremstyle{definition}

\theoremstyle{remark}
\newtheorem{remark}[theorem]{Remark}

\newcommand{\SE}{\mathtt{SE}}

\newcommand{\EE}{\mathtt{IGE}}

\newcommand{\SGE}{\mathtt{LGU}}
\newcommand{\LAG}{\mathrm{ImG}}

\newcommand{\CG}{\mathrm{InG}}
\newcommand{\CS}{\mathtt{InS}}
\newcommand{\eb}[2]{#1\,$\pm$\,#2}

\title{Beyond Semantic Equivalence: Logical Graphs for Large Language Model Uncertainty Quantification}
\author{
    Yanni Dong,\textsuperscript{\rm 1}\thanks{Supported by the postdoctoral program at Guangzhou Jiayi Software Technology Co., Ltd. and Nansha IT Park.}\equalcontrib
    Minghua Liu,\textsuperscript{\rm 2}\equalcontrib
    Meilin Zhu,\textsuperscript{\rm 2}
    Xiaowei Huang,\textsuperscript{\rm 3}
    Lijun Zhang\textsuperscript{\rm 2}\thanks{Corresponding author.}
}
\affiliations{
    \textsuperscript{\rm 1}Institute of Intelligent Software, Guangzhou, China\\
    \textsuperscript{\rm 2}Institute of Software, Chinese Academy of Sciences, Beijing, China\\
    \textsuperscript{\rm 3}University of Liverpool, Liverpool, UK\\
    yannidong@outlook.com, liuminghua24@mails.ucas.ac.cn, martial0296@gmail.com, xiaowei.huang@liverpool.ac.uk, zhanglj@ios.ac.cn
}

\begin{document}
\maketitle
\begin{abstract}
 Large Language Models often produce confidently stated yet unreliable outputs, posing critical challenges for deployment in safety-sensitive applications. Existing uncertainty metrics such as semantic entropy capture agreement at the level of semantic equivalence, but largely ignore the {logical relationships} between distinct answers. As a result, they tend to overestimate uncertainty and falsely flag hallucinations in settings where generated responses are diverse in form yet logically compatible (e.g., differing only in granularity or specificity). We propose {Logical Graph Uncertainty (LGU)}, a framework that explicitly models implication and incompatibility among answers. 
 LGU aggregates probability mass along entailment chains onto the most specific hypotheses the answers support, measures the entropy of the resulting distribution, and penalizes mutual incompatibility among those hypotheses. Across multiple question-answering benchmarks and model families, LGU ranks first on average among existing uncertainty measures, with its largest gains---up to {+7.1\% AUROC} and {+3.5\% AUARC} over semantic entropy---on questions whose sampled answers are logically structured.
\end{abstract}

\section{Introduction}

\subsection{Uncertainty Quantification for Large Language Models} 
\paragraph{Large Language Model Hallucinations.} The remarkable capabilities of Large Language Models (LLMs) have been demonstrated across diverse natural language processing tasks, from question answering and text generation to complex reasoning and code synthesis \cite{llm-code-synthesis}, enabling their widespread adoption across numerous domains including software development \cite{llm-software-1,llm-software-2}, autonomous driving \cite{llm-ad} and education \cite{llm-edu}. However, despite their impressive performance, LLMs exhibit a critical limitation: they produce responses with varying degrees of uncertainty while maintaining seemingly equal confidence in their outputs. This uncertainty manifests in multiple forms, including factual inaccuracies \cite{llm-fact}, inconsistent reasoning patterns \cite{llm-reasoning-inconsistency} and hallucinated content \cite{hallucination-survey,uq4hallucination}, where successive generated responses can be mutually contradictory or semantically and logically unrelated. 
Reliable uncertainty estimates support informed decision-making and model calibration, and serve as a foundational component for hallucination detection \cite{uncertainty-survey1,uncertainty-survey2}, flagging unreliable outputs before they propagate through application pipelines. In this work we focus on {epistemic} (model-side, semantic) uncertainty in LLM answers---arising from the model's own knowledge and reasoning---rather than {aleatoric} uncertainty inherent to ambiguous queries.

\paragraph{Semantic-Level Measures.} Current state-of-the-art approaches to Uncertainty Quantification (UQ) in LLMs can be broadly categorized into several paradigms. Token-level methods \cite{prompt1,prompt2} focus on the distributional confidence of individual token predictions, typically leveraging softmax probabilities or token-level entropy measures. Sequence-level approaches \cite{sequence1,shapley} aggregate token-wise uncertainties or employ ensemble methods to assess overall response reliability. Two key semantic-level measures are {Semantic Entropy} (SE) \cite{se} and {Kernel Language Entropy (KLE)} \cite{kle}. Given an input $x$, sampled answers are grouped into $k$ paraphrase-equivalent semantic clusters $\mathcal{C}=\{c_1,\dots,c_k\}$, and each cluster receives probability mass $p(c_j\mid x)$ by summing the probabilities of its members. Specifically, SE computes
$
\SE(x)= -\sum_{j=1}^{k} p(c_j\mid x)\log p(c_j\mid x).$ Then KLE replaces semantic grouping with a kernel $K\in\mathbb{R}^{k\times k}$ over clusters and measures uncertainty via the von Neumann entropy
$
\mathtt{KLE}(x)= -\mathrm{Tr}[K\log K]
= -\sum_{i=1}^{k}\lambda_i\log \lambda_i,
$
where $\{\lambda_i\}$ are eigenvalues of a unit-trace positive semidefinite $K$, typically introducing additional kernel hyperparameters.

\begin{figure}[htbp]
    \centering
\includegraphics[width=\linewidth]{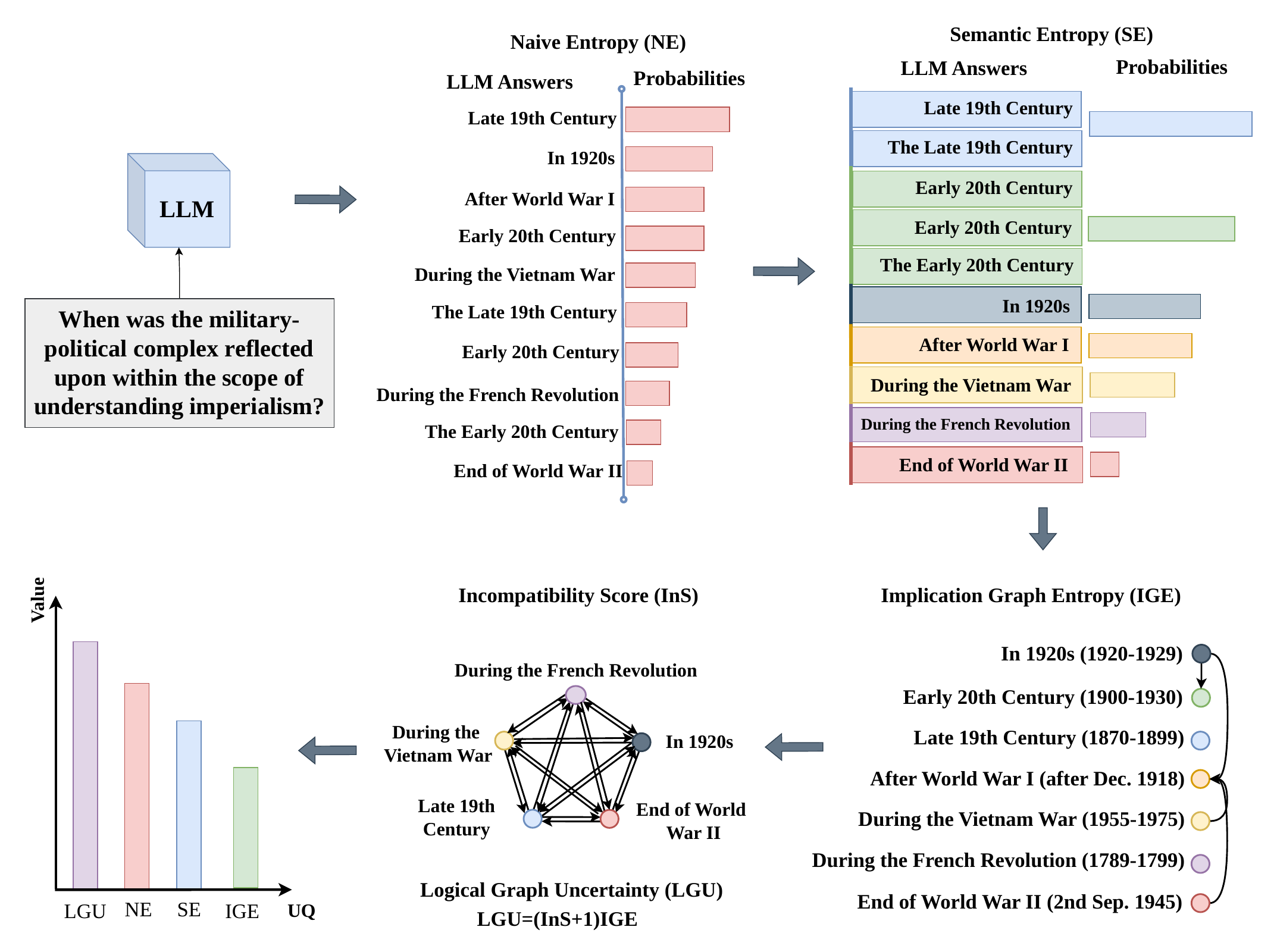}
    \caption{\textbf{Overview Comparison of Uncertainty Measures on a Question Answering Example.} From NE, SE to IGE, IGE aggregates probability mass along implication chains onto root hypotheses, and LGU scales it by InS over those roots, yielding $\mathtt{LGU}=(\mathtt{InS}+1)\mathtt{IGE}$. Here a larger LGU is desirable: it flags that the model spreads mass across mutually incompatible root hypotheses.}
    \label{fig:comparison}
\end{figure}
\paragraph{Limitation of Semantic Similarity.}
Despite their advances, semantic UQ methods judge only whether answers are {paraphrase-equivalent}, ignoring logical relationships among non-identical responses. For open-ended questions, multiple correct answers may exist that are not paraphrases yet are logically related; some entail others, some are compatible refinements, and some are genuinely mutually exclusive. SE assigns high entropy because the model generates diverse valid answers, conflating diversity with uncertainty.
Figure~\ref{fig:comparison} illustrates this limitation. Naive Entropy (NE)\footnote{Throughout this paper, we use the NE baseline as implemented in \cite{se_nature}.} treats raw strings as independent categories, inflating uncertainty under superficial variation. SE mitigates paraphrastic variation {but fails when answers are logically nested}: ``in the 1920s'' implies ``in the early 20th century'', yet treating them as separate clusters double-counts probability mass --- the failure mode our method addresses.



\subsection{Our Logical Graph-Based Uncertainty Framework}
We propose a {logical graph-based} uncertainty framework with two quantities: {Implication Graph Entropy (IGE)} and {Logical Graph Uncertainty (LGU)}. We construct an {Implication Graph} (ImG) whose directed edges encode logical implication relations among candidate answers under a shared truth-conditional standard (i.e., within the question-induced hypothesis space). Adopting graph-entropy principles as a measure of structural complexity~\cite{DEHMER201157}, we aggregate probability mass along implication chains, pushing mass from more general statements toward the more specific refinements that entail them, and compute entropy over the resulting distribution on the most specific (logically strongest), non-redundant hypotheses (roots). Because reachability sets may overlap, root scores are normalized to form a valid probability distribution. IGE thus measures how probability mass is distributed over distinct most-specific hypotheses, independent of their mutual consistency.

On top of ImG, we build an {Incompatibility Graph} (InG) over the most specific (root) hypotheses and define LGU by penalizing dense patterns of mutual inconsistency. Unlike implication-based aggregation, LGU asks: how much probability mass is spread over hypotheses that cannot simultaneously be true? This distinction is illustrated in Figure~\ref{fig:comparison}. When a model's fine-grained refinements are compatible (e.g., specific years that all imply ``in the 1920's''), aggregating mass along implication chains collapses them onto a single root and IGE stays low---correctly signalling low uncertainty. When instead the model distributes mass over mutually exclusive roots (e.g., {1920} vs.\ {1926} vs.\ {1929}), IGE alone can understate the conflict, whereas LGU becomes high because the roots form a densely incompatible set, exposing epistemic conflict that NE and SE may miss.

To quantify incompatibility, we model InG as a directed graph and define the Incompatibility Score (InS) using graph-density indices---edge density (and a weighted variant), average in-/out-degree, and the Estrada index---which we compare experimentally. Edge density yields a size-normalized score in $[0,1]$, delivers strong {Area Under the Receiver Operating Characteristic curve} (AUROC; $\uparrow$) overall, and achieves the lowest {Expected Calibration Error} (ECE; $\downarrow$) in our experiments; we adopt it as the default.

\begin{table}[htbp]
    \centering
    \vspace{-3mm}
    \caption{\textbf{Computational Complexity and Structural Exploitation of Four Semantic Uncertainty Measures.} Here $n$ is the number of answers.}
    \label{tab:uq-comparison}
    \resizebox{\columnwidth}{!}{
    \begin{tabular}{lcccccc}
        \toprule
        Method & Algorithm Complexity & Semantic Similarity  & Logical Relation & Extra Parameter& Domain \\
        \midrule
        NE        & $\boldsymbol{\mathcal{O}(n)}$
                                 & No & No    & \textbf{No}& [0,$\log n$]  \\
        SE        & $\mathcal{O}(n^2)$
                                 & \textbf{Yes} & No    & \textbf{No}& [0,$\log n$]   \\
        KLE  & $\mathcal{O}(n^3)$
                                 & \textbf{Yes}  & No  &Yes& [0,$\log n$]  \\
        \textbf{LGU}& $\mathcal{O}(n^2)$
                                 & \textbf{Yes}  & Yes & \textbf{No} & [0,$(1+\CS)\log n$]  \\
        \bottomrule
    \end{tabular}%
    }
    \vspace{-2mm}
\end{table}

Table~\ref{tab:uq-comparison} compares LGU with NE, SE, and KLE. LGU alone incorporates explicit logical relations without extra hyperparameters, stays $\mathcal{O}(n^2)$ (vs.\ KLE's $\mathcal{O}(n^3)$), and its wider range $[0,(1+\CS)\log n]$ gives finer granularity to separate benign diversity from genuine conflict.

\paragraph{Evaluation.} We validate LGU on four Question Answering (QA) benchmarks---TriviaQA, SQuAD, BioASQ, and Natural Questions (NQ)---for hallucination discrimination and selective prediction. 
Across datasets, LGU attains the best average rank against semantic- and token-level baselines in both ranking and calibration, with the largest gains on questions that elicit logically structured answer sets.
Full quantitative results are reported in Section~\nameref{sec:exp}.

To summarize, our contributions towards more precise UQ in LLMs are as follows:
\begin{tcolorbox}[
    colback=gray!2,
    colframe=black!75,
    fonttitle=\bfseries\large,
    title=Contributions,
    arc=0.5mm,
    boxrule=0.6pt,
    left=8pt, right=8pt, top=8pt, bottom=8pt
]
\begin{itemize}
  [leftmargin=*, itemsep=4pt, parsep=2pt]
    \item \textbf{Logical Graph Reformulation.} We recast LLM uncertainty as
    structured reasoning over the answer space---asking not ``do these answers
    agree?'' but ``what  do they imply, and do they contradict?''---via an ImG that propagates mass toward   the most specific
\end{itemize}
\end{tcolorbox}

\begin{tcolorbox}[
    colback=gray!2,
    colframe=black!75,
    arc=0.5mm,
    boxrule=0.6pt,
    left=8pt, right=8pt, top=8pt, bottom=8pt
]
\quad hypotheses and an InG that penalizes mutual conflict.
\begin{itemize}[leftmargin=*, itemsep=4pt, parsep=2pt]
   
    \item \textbf{Parameter-Free Graph Uncertainty.} We instantiate this as
    IGE and LGU: closed-form, hyperparameter-free scores computable in
    $\mathcal{O}(n^2)$ time, with no tuning, supervision, or oracle labels.
    \item \textbf{Empirical Study.} Across four QA datasets and eight backbones
    (7B--32B), in white- and black-box settings, 
    LGU attains the best average rank on AUROC, Area Under the Accuracy-Rejection Curve (AUARC), and ECE, with its largest gains over SE---up to $+7.1\%$ AUROC and $+3.5\%$ AUARC---on questions whose answers are logically structured.
\end{itemize}
\end{tcolorbox}


\section{Logical Graph Measures}
\label{sec:lge}

We model the set of LLM answers as {logical graphs}: answers are grouped into equivalence classes (clusters), and edges encode implication and incompatibility among these clusters. Based on this structure, we define two uncertainty measures. IGE aggregates probability mass along implication chains and computes entropy over the most specific clusters (roots). A graph-density-style InS among these roots is employed by LGU to further scale IGE, penalizing mutually exclusive most-specific hypotheses more strongly than compatible semantic diversity.


\subsection{Logical Graphs}
Before introducing the two graphs, ImG and InG, we establish key terminology and notation.

\paragraph{Preliminaries.} A \textbf{directed graph} is an ordered pair $G=(\mathcal{V},\mathcal{E})$,
where $\mathcal{V}$ is a set of vertices and $\mathcal{E}\subseteq \mathcal{V}\times \mathcal{V}$
is a set of ordered pairs called \textbf{directed edges}. 
An \textbf{empty graph} is a graph with $\mathcal{E}=\emptyset$.
A \textbf{self-loop} is an edge of the form $(v,v)$.
Throughout this paper, all graphs are assumed to be \textbf{directed graphs without self-loops}. For a vertex $v\in\mathcal{V}$, its \textbf{in-neighborhood} and \textbf{out-neighborhood} are defined as
$N^{-}(v)=\{u\in\mathcal{V}:(u,v)\in\mathcal{E}\}$ and
$N^{+}(v)=\{u\in\mathcal{V}:(v,u)\in\mathcal{E}\}$, respectively.
The \textbf{in-degree} and \textbf{out-degree} of $v$ are
$d^{-}(v)= |N^{-}(v)|$ and $d^{+}(v)= |N^{+}(v)|$. A \textbf{directed cycle} is a sequence of {distinct} vertices $(v_0,\dots,v_{k-1})$
such that $(v_i,v_{i+1})\in \mathcal{E}$ for $i=0,\dots,k-2$ and $(v_{k-1},v_0)\in \mathcal{E}$.
A directed graph is a \textbf{directed acyclic graph (DAG)} if it contains no directed cycles.
A vertex $r\in\mathcal{V}$ is a \textbf{root} if no edge terminates at $r$, i.e., $d^{-}(r)=0$.

\begin{remark}\label{rem:root}
A finite DAG must have at least one root. Indeed, if an arbitrary vertex $v_0$ is not a root, one can construct a sequence of vertices by picking $v_{i+1}$ such that $(v_{i+1},v_i)\in\mathcal{E}$ whenever $v_i$ is not a root. Since $\mathcal{V}$ is finite, this sequence must eventually repeat a vertex, which creates a directed cycle and contradicts acyclicity. Hence, the process must terminate at a root.
\end{remark}

\paragraph{Implication Graph.}
Given a query $x$ and an LLM $L$, we sample multiple candidate answers. Two answers $a,b$ are {(logically) equivalent} w.r.t.\ $x$ if, in every possible situation compatible with $x$, (i) $a$ true implies $b$ true and (ii) $b$ true implies $a$ true. We write $a\equiv b$. Each equivalence class forms an {equivalent cluster} $c$.
The ImG is defined as
\(
\LAG(x\mid L)=(\mathcal{V},\mathcal{E}),
\)
where
$
\mathcal{V}=\{\, c_i \mid c_i \text{ is an equivalent cluster} \,\},$
and for $c_s,c_t\in\mathcal{V}$,
$
(c_s,c_t)\in\mathcal{E}
\iff
c_s \Rightarrow c_t \ \text{w.r.t.\ } x,
$
i.e., whenever any (equivalently, every) answer in $c_s$ is true, any (equivalently, every) answer in $c_t$ is also true, in all situations compatible with $x$.

\begin{proposition}\label{pro:entailment}
Let $x$ be a query and $L$ be an LLM. The implication relation on $\mathcal{V}$ of $\LAG(x\mid L)$ is reflexive and transitive.
\end{proposition}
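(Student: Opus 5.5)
The plan is to unfold the definition of implication between clusters in terms of the situations compatible with $x$. Both properties will then follow from the corresponding properties of material implication, applied pointwise in each situation.

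First I will fix notation. Let $\Sigma_x$ be the set of situations compatible with $x$. For a cluster $c$ and $\sigma\in\Sigma_x$, write $T_\sigma(c)$ for ``the answers in $c$ are true in $\sigma$''. This is well defined because all members of an equivalent cluster are logically equivalent w.r.t.\ $x$, so they have the same truth value in every $\sigma\in\Sigma_x$. This is exactly the ``any (equivalently, every)'' clause in the definition. With this notation, $c_s\Rightarrow c_t$ means that for all $\sigma\in\Sigma_x$, $T_\sigma(c_s)$ implies $T_\sigma(c_t)$. Equivalently, letting $M(c)=\{\sigma\in\Sigma_x : T_\sigma(c)\}$, it means $M(c_s)\subseteq M(c_t)$.

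\textbf{Reflexivity.} For any $c\in\mathcal{V}$ and any $\sigma$, $T_\sigma(c)$ trivially implies $T_\sigma(c)$, i.e.\ $M(c)\subseteq M(c)$. Hence $c\Rightarrow c$.

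\textbf{Transitivity.} Suppose $c_r\Rightarrow c_s$ and $c_s\Rightarrow c_t$. Fix $\sigma\in\Sigma_x$ with $T_\sigma(c_r)$. The first hypothesis gives $T_\sigma(c_s)$, and the second then gives $T_\sigma(c_t)$. Since $\sigma$ was arbitrary, $c_r\Rightarrow c_t$. In set form this is simply $M(c_r)\subseteq M(c_s)\subseteq M(c_t)$.

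The only delicate point concerns the edge set versus the relation. The paper assumes graphs have no self-loops, so reflexivity must be read as a property of the implication relation on $\mathcal{V}$, not as the presence of edges $(c,c)$ in $\mathcal{E}$. I will state this explicitly: $\mathcal{E}$ records the relation off the diagonal. I will also note that, because distinct clusters are inequivalent, the relation is antisymmetric on $\mathcal{V}$. This makes it a partial order, so $\mathcal{E}$ is acyclic and Remark~\ref{rem:root} applies. This last observation is not needed for the proposition itself, but it is the natural follow-up.
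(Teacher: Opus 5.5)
Your argument is correct. It is also the same by-definition reasoning the paper relies on: the paper gives no separate proof of this proposition and treats reflexivity and transitivity as immediate once implication is read as inclusion $M(c_s)\subseteq M(c_t)$ of the $x$-compatible situations in which each cluster is true. Your treatment of self-loops and your antisymmetry remark also match the paper's step of identifying mutually implying clusters and suppressing self-loops, which is how it obtains Theorem~\ref{thm:dag}.
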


These properties both establish ImG as a DAG (Theorem~\ref{thm:dag}) and provide verifiable criteria for whether a relation extraction method---e.g., a Natural Language Inference (NLI) model---produces predictions consistent with our assumptions (transitivity is checked empirically on sampled answer triples).

Implication is antisymmetric at the level of equivalence classes: if two clusters imply each other, they are equivalent and thus identified as one vertex. Suppressing self-loops yields the following.

\begin{theorem}\label{thm:dag}
For a given LLM $L$ and query $x$, the graph $\LAG(x\mid L)$ is a DAG.
\end{theorem}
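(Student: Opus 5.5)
We argue by contradiction, using Proposition~\ref{pro:entailment} together with the fact that vertices of $\LAG(x\mid L)$ are equivalence classes under $\equiv$. By the definition in the Preliminaries, the graph has no self-loops: the reflexive pairs $(c,c)$ given by Proposition~\ref{pro:entailment} are suppressed. So every edge $(c_s,c_t)\in\mathcal{E}$ joins two distinct clusters with $c_s\Rightarrow c_t$. It therefore suffices to show that no directed cycle $(v_0,\dots,v_{k-1})$ of distinct vertices exists. Cycles of length $k=1$ would be self-loops, so we may assume $k\ge 2$.

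Suppose such a cycle exists. We first use transitivity from Proposition~\ref{pro:entailment}, applied along the path $v_1\to v_2\to\cdots\to v_{k-1}\to v_0$, to get $v_1\Rightarrow v_0$ w.r.t.\ $x$. If $k=2$, the edge $(v_1,v_0)$ gives this directly. The edge $(v_0,v_1)$ gives $v_0\Rightarrow v_1$.

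Next we unfold the definitions at the level of answers. Pick representatives $a\in v_0$ and $b\in v_1$. Because the edge relation is defined via "any (equivalently, every) answer", the two implications $v_0\Rightarrow v_1$ and $v_1\Rightarrow v_0$ mean that in every situation compatible with $x$, $a$ true implies $b$ true, and $b$ true implies $a$ true. Hence $a\equiv b$, so $a$ and $b$ lie in the same equivalence class, and $v_0=v_1$. This contradicts the distinctness of the cycle vertices, so $\LAG(x\mid L)$ is a DAG. Finiteness plays no role here; it is needed only afterwards, via Remark~\ref{rem:root}, to guarantee that roots exist.

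The main subtle point is this antisymmetry step, namely passing from mutual implication of clusters to equality of vertices. It depends on the cluster-level implication being well defined, meaning independent of the chosen representatives. I would justify that first: if $a\equiv a'$ and $b\equiv b'$, then $a\Rightarrow b$ implies $a'\Rightarrow b'$, by chaining the situation-wise implications $a'\Rightarrow a\Rightarrow b\Rightarrow b'$. Once that is in place, transitivity is used only to collapse the cycle to a $2$-cycle, and the rest is routine.
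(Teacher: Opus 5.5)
Your proof is correct and takes essentially the paper's approach. The paper's brief justification rests on Proposition~\ref{pro:entailment}, on implication being antisymmetric on equivalence classes, and on self-loops being suppressed; you make this explicit by using transitivity to reduce any directed cycle to mutual implication $v_0\Rightarrow v_1$ and $v_1\Rightarrow v_0$, which forces $v_0=v_1$, and your check that cluster-level implication does not depend on the chosen representatives is a detail the paper leaves implicit.
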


Let $\mathcal{R}(\LAG(x\mid L))$ denote the set of roots of $\LAG(x\mid L)$. By Remark~\ref{rem:root}, $\mathcal{R}(\LAG(x\mid L))\neq\emptyset$, a necessary precondition for the aggregated distribution in IGE to be valid.

\paragraph{Incompatibility Graph.}
For a query $x$, two answers $a$ and $b$ are {compatible w.r.t.\ $x$} if there exists
at least one situation consistent with $x$ in which both $a$ and $b$ are
simultaneously true. Conversely, $a$ and $b$ are {incompatible
w.r.t.\ $x$} if no such situation exists. 
The \textbf{InG} is defined as \(
\CG(x\mid L)=(\mathcal{V},\mathcal{E}),
\mathcal{V}= \mathcal{R}(\LAG(x\mid L)).
\)
We encode incompatibility as a {symmetric relation} on roots; concretely, whenever two roots are incompatible, we include {both} directed edges $(c_s,c_t)$ and $(c_t,c_s)$ in $\mathcal{E}$. Formally,
$
\mathcal{E} = \Bigl\{ (c_s,c_t) \in \mathcal{V} \times \mathcal{V} \ \Bigm|\ 
c_s \neq c_t, \ \forall a \in c_s,\ \forall b \in c_t, 
\ a \text{ and } b \text{ are incompatible w.r.t.\ } x \Bigr\}.$
Since all answers within a cluster are logically equivalent by construction, incompatibility between clusters is well-defined (equivalently, it suffices to check any representatives).

\begin{proposition}\label{pro:incompatibility}
The incompatibility relation is symmetric, non-reflexive, and non-transitive.
\end{proposition}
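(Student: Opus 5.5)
The plan is to verify the three properties directly from the definition of compatibility: there exists a situation consistent with $x$ in which both answers are true. Incompatibility is its negation. For symmetry and non-transitivity I would work at the level of answers. For the graph $\CG(x\mid L)$, the same arguments lift to clusters, since the paper notes that incompatibility between clusters can be checked on representatives.

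\textbf{Symmetry.} Compatibility of $a$ and $b$ asks for a situation $s$ in which ``$a$ true and $b$ true''. This condition is symmetric in $a$ and $b$ because conjunction is commutative. So $a,b$ compatible iff $b,a$ compatible, and by negation $a,b$ incompatible iff $b,a$ incompatible. At the graph level, $(c_s,c_t)\in\mathcal{E}$ iff $(c_t,c_s)\in\mathcal{E}$. This is consistent with the construction that inserts both directed edges.

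\textbf{Non-reflexivity.} I read this as: no root is incompatible with itself, so the relation has no self-loops. The answers we consider are admissible hypotheses, i.e.\ each is true in at least one situation compatible with $x$; I would state this assumption explicitly. Taking $s$ to be such a situation for $a$ shows $a$ is compatible with itself. Hence $(c,c)\notin\mathcal{E}$, which is also enforced by the condition $c_s\neq c_t$ in the definition of $\mathcal{E}$. If the weaker meaning, ``not reflexive in general'', is intended, this argument already gives it.

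\textbf{Non-transitivity.} This needs a counterexample: $a$ incompatible with $b$, and $b$ incompatible with $c$, yet $a$ compatible with $c$. I would take the paper's running example with the question ``In which year did event $E$ happen?''. Let $a=$ ``1926'', $b=$ ``1920'', $c=$ ``1926''. The cleaner choice is to pick $a$ and $c$ distinct but compatible roots, for example $a=$ ``in 1926'' and $c=$ ``in the spring of the mid-1920s'', which are compatible but neither implies the other. Then $b=$ ``in 1920'' contradicts both, while $a$ and $c$ co-hold in a situation where $E$ occurred in spring 1926.

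An even simpler choice is $a=c$, since $a$ is compatible with itself by non-reflexivity. This gives $a$ incompatible with $b$, $b$ incompatible with $a$, yet $a$ not incompatible with $a$. That shows transitivity fails, and symmetry combined with non-reflexivity already forces this failure. I would present this $a=c$ argument as the main proof and the distinct-root example as illustration.

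The main obstacle is definitional rather than technical. One must fix precisely what ``non-reflexive'' and ``non-transitive'' mean (irreflexive versus not reflexive, and transitivity failing in general). One must also justify the satisfiability assumption on answers that makes self-compatibility hold.
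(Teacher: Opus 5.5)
Your proposal is correct and takes the same route as the paper, which gives no separate proof and simply asserts that the properties follow from the definition of (in)compatibility: symmetry is built into the construction, and non-reflexivity and non-transitivity hold ``by definition'' since a hypothesis cannot contradict itself. Your version is more careful than the paper's, because you make explicit the satisfiability assumption that self-compatibility silently requires and you give an actual counterexample for non-transitivity. Note only that the $a=c$ argument needs at least one incompatible pair to exist, which your 1920/1926 instance supplies. Your distinct-root example is also the one that still works if transitivity on the loopless InG is read over distinct vertices.
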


Symmetry likewise provides an empirical sanity check: a reliable extractor should predict incompatibility consistently in both directions. Non-reflexivity and non-transitivity hold by definition and need no verification.

\paragraph{From Semantic to Logical Relations.} We operationalize the three relations with an off-the-shelf NLI model (DeBERTa-Large-MNLI) applied to each ordered answer pair. {Bidirectional} entailment ($a\Rightarrow b$ and $b\Rightarrow a$) is semantic equivalence and defines clusters exactly as SE does. The logical structure comes from the two signals SE discards: a {unidirectional} entailment ($a\Rightarrow b$, $b\not\Rightarrow a$) yields a directed implication edge in ImG, so a specific answer points to the more general ones it entails; a {contradiction} in either direction yields an incompatibility edge in InG. Compatible refinements are thus collapsed onto a shared root, while conflicting answers are flagged as incompatible. Since this relies on the NLI predictions exhibiting the transitivity (implication) and symmetry (incompatibility) of Propositions~\ref{pro:entailment} and~\ref{pro:incompatibility}, we validate both at scale in Appendix~\nameref{app:nli_verification}, finding a 97.0\% transitivity rate and 92.8\% symmetry.

\begin{figure*}[htbp]
    \centering
    \includegraphics[width=\linewidth]{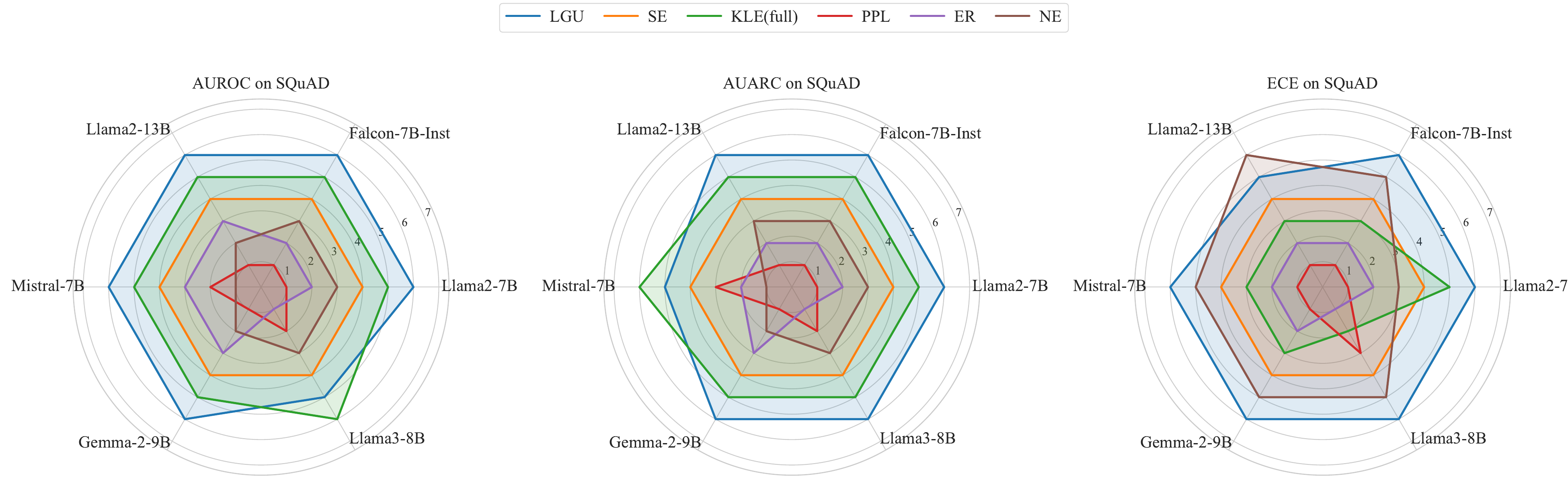}
    \caption{\textbf{Performance robustness across LLMs on SQuAD.} Radar charts show the rank-based performance of uncertainty methods across six model architectures. A larger shaded area corresponds to better and more consistent performance across tasks.}
    \label{fig:radar_chart}
\end{figure*}

\definecolor{lightpink}{RGB}{255,235,240}
\begin{table*}[t]
    \centering
    \caption{\textbf{Experimental Results for Falcon-7b-instruct.} We report AUROC ($\uparrow$), AUARC ($\uparrow$), and ECE ($\downarrow$). Best results for each metric are highlighted in \textbf{bold}.}
    \label{tab:falcon_results}
    \renewcommand{\arraystretch}{1.2}
    
    \resizebox{\textwidth}{!}{
        \begin{tabular}{c|l|ccc|ccc|ccc|ccc}
            \toprule
            
            & \multirow{2}{*}{\textbf{Method}} 
            & \multicolumn{3}{c|}{\textbf{Trivia QA}} 
            & \multicolumn{3}{c|}{\textbf{SQuAD}} 
            & \multicolumn{3}{c|}{\textbf{BioASQ}} 
            & \multicolumn{3}{c}{\textbf{NQ}} \\
            
            \cmidrule(lr){3-5} \cmidrule(lr){6-8} \cmidrule(lr){9-11} \cmidrule(lr){12-14}
            
            & 
            & \textbf{AUROC} & \textbf{AUARC} & \textbf{ECE} 
            & \textbf{AUROC} & \textbf{AUARC} & \textbf{ECE} 
            & \textbf{AUROC} & \textbf{AUARC} & \textbf{ECE} 
            & \textbf{AUROC} & \textbf{AUARC} & \textbf{ECE} \\
            \midrule
            
            \multirow{6}{*}{\rotatebox[origin=c]{90}{\shortstack{\textbf{Falcon-7b-}\\\textbf{instruct}}}} 
            
            & SE & \eb{0.824}{0.039} & \eb{0.687}{0.048} & \eb{0.087}{0.034} & \eb{0.667}{0.050} & \eb{0.544}{0.059} & \eb{0.201}{0.044} & \eb{0.678}{0.052} & \eb{0.596}{0.057} & \eb{0.184}{0.043} & \eb{0.681}{0.053} & \eb{0.500}{0.061} & \textbf{\eb{0.112}{0.039}} \\

            & NE & \eb{0.781}{0.044} & \eb{0.661}{0.051} & \eb{0.113}{0.047} & \eb{0.610}{0.054} & \eb{0.516}{0.058} & \eb{0.105}{0.042} & \eb{0.672}{0.053} & \eb{0.581}{0.060} & \textbf{\eb{0.091}{0.043}} & \eb{0.700}{0.049} & \eb{0.507}{0.063} & \eb{0.214}{0.054} \\
            
            & Embedding Regression & \eb{0.728}{0.051} & \eb{0.622}{0.054} & \eb{0.248}{0.044} & \eb{0.606}{0.055} & \eb{0.510}{0.058} & \eb{0.289}{0.046} & \eb{0.661}{0.054} & \eb{0.579}{0.056} & \eb{0.274}{0.044} & \eb{0.684}{0.053} & \eb{0.483}{0.061} & \eb{0.237}{0.043} \\

            & PPL & \eb{0.694}{0.052} & \eb{0.616}{0.056} & \eb{0.500}{0.051} & \eb{0.578}{0.056} & \eb{0.500}{0.062} & \eb{0.512}{0.054} & \eb{0.644}{0.052} & \eb{0.585}{0.055} & \eb{0.467}{0.051} & \eb{0.664}{0.058} & \eb{0.484}{0.062} & \eb{0.563}{0.048} \\
            
            & KLE(Full) & \eb{0.840}{0.039} & \eb{0.698}{0.046} & \textbf{\eb{0.073}{0.031}} & \eb{0.712}{0.051} & \eb{0.573}{0.057} & \eb{0.211}{0.045} & \eb{0.718}{0.048} & \eb{0.614}{0.055} & \eb{0.174}{0.044} & \eb{0.732}{0.050} & \eb{0.527}{0.060} & \eb{0.147}{0.043} \\
            \cmidrule{2-14}
            
            & \colorbox{lightpink}{\textbf{LGU (Ours)}} & \colorbox{lightpink}{\textbf{\eb{0.849}{0.038}}} & \colorbox{lightpink}{\textbf{\eb{0.702}{0.046}}} & \colorbox{lightpink}{\eb{0.151}{0.040}} & \colorbox{lightpink}{\textbf{\eb{0.714}{0.050}}} & \colorbox{lightpink}{\textbf{\eb{0.579}{0.059}}} & \colorbox{lightpink}{\textbf{\eb{0.098}{0.040}}} & \colorbox{lightpink}{\textbf{\eb{0.726}{0.048}}} & \colorbox{lightpink}{\textbf{\eb{0.619}{0.056}}} & \colorbox{lightpink}{\eb{0.132}{0.041}} & \colorbox{lightpink}{\textbf{\eb{0.752}{0.049}}} & \colorbox{lightpink}{\textbf{\eb{0.533}{0.061}}} & \colorbox{lightpink}{\eb{0.138}{0.038}} \\
            
            \bottomrule
        \end{tabular}
    }
\end{table*}

\subsection{Graph-Based Uncertainty}

Let $C=\{c_1,\dots,c_t\}$ be the equivalent clusters induced by $L$ for query $x$, with probabilities $P=\{p_1,\dots,p_t\}$.

\paragraph{Implication Graph Entropy.}
Let $\mathcal{R}(\LAG(x\mid L))=\{r_1,\dots,r_k\}$. For each root $r_i$, define the aggregated mass
$p'(r_i)= \sum_{c_j\in \mathrm{Reach}(r_i)} p_j,$
where $\mathrm{Reach}(r_i)$ is the set of vertices reachable from $r_i$ (including $r_i$). Since reachability sets may overlap, we normalize $p(r_i)= \frac{p'(r_i)}{\sum_{j=1}^{k} p'(r_j)}.$
The \textbf{IGE} is
$$\EE(x\mid L)= -\sum_{i=1}^{k} p(r_i)\log p(r_i).$$

\paragraph{Probabilistic Interpretation.} The aggregation $p'(r_i)=\sum_{c_j\in\mathrm{Reach}(r_i)} p_j$ marginalizes over a logically coherent hypothesis space: since a root $r_i$ entails every cluster reachable from it, assigning the mass of a general cluster $c_j$ to each root that entails it attributes that mass to the specific hypotheses {consistent with} $c_j$ rather than treating $c_j$ as a mutually exclusive outcome. Thus $p'(r_i)$ is the evidential support the sample provides for the strongest hypothesis $r_i$. As overlapping reachability sets give $\sum_i p'(r_i)\ge 1$, the normalization $p(r_i)=p'(r_i)/\sum_j p'(r_j)$ restores a proper distribution over the roots. IGE is the Shannon entropy of this distribution: low when mass concentrates on a single logically strongest hypothesis and high when dispersed across genuinely distinct ones, independent of their mutual incompatibility (which LGU handles separately).

\paragraph{Discrete IGE (DIGE).}
In black-box settings, we sample $n$ outputs $a_1,\dots,a_n$ and estimate cluster probabilities by empirical frequencies; $\mathtt{DIGE}(x\mid L)$ is computed by substituting these estimates into the definition of $\EE(x\mid L)$.

\paragraph{Incompatibility Score.}
Let $\CG(x\mid L)=(\mathcal{V},\mathcal{E})$ be the InG, a directed graph with no self-loops.
Depending on the data type, $\mathcal{E}$ can be (i) binary, recording whether two root hypotheses are incompatible,
or (ii) weighted, equipped with a weight function $w:\mathcal{E}\to[0,1]$ where each edge $(u,v)\in\mathcal{E}$ carries a normalized weight $w_{uv}$ that estimates the probability that roots $u$ and $v$ are incompatible. Let $s(u,v)\in[0,1]$ be the incompatibility probability assigned to an ordered pair of answers by the same relation-extraction module used to build the graph. For $u\in \mathcal{V}$ and $v\in \mathcal{V}$ we set
\begin{equation*}
w_{uv}=\tfrac{1}{2}\bigl(s(u,v)+s(v,u)\bigr)\in[0,1].
\end{equation*}
In our experiments this module is an off-the-shelf NLI model, for which $s(u,v)$ is the softmax mass on the {contradiction} label. We view the InS as a family of {graph-density metrics} that summarize how pervasive mutual inconsistency is among the roots, and we ablate several plausible instantiations:
\begin{itemize}[leftmargin=*, topsep=0pt, itemsep=2pt]

\item \textbf{(Weighted) Edge Density.}
A normalized sparsity measure in network analysis~\cite{Newman2010,Fortunato2010}. Since a loop-free directed graph on $|\mathcal{V}|$ nodes has at most $|\mathcal{V}|(|\mathcal{V}|-1)$ edges, we define the directed edge density
\[\CS_{\mathrm{dens}}(x\mid L)=
\begin{cases}
\displaystyle \frac{|\mathcal{E}|}{|\mathcal{V}|(|\mathcal{V}|-1)}, & |\mathcal{V}|\ge 2,\\[6pt]
0, & |\mathcal{V}|\le 1.
\end{cases}\]
This yields $\CS_{\mathrm{dens}}(x\mid L)\in[0,1]$ and is directly comparable across queries with different numbers of roots. When edge weights $w_{uv}\in[0,1]$ are available, we replace the edge count with total weight.

\item \textbf{Average In-/Out-Degree.}
A scale-aware statistic capturing how broadly incompatibility is distributed. Define the average in- and out-degree over roots as

$\CS_{\mathrm{in}}(x\mid L) = \frac{1}{|\mathcal{V}|}\sum_{u\in\mathcal{V}} d^{-}(u), $ and $
\CS_{\mathrm{out}}(x\mid L) = \frac{1}{|\mathcal{V}|}\sum_{u\in\mathcal{V}} d^{+}(u).$
For directed graphs, $\sum_{u} d^{-}(u)=\sum_{u} d^{+}(u)=|\mathcal{E}|$, hence
$\CS_{\mathrm{in}}(x\mid L)=\CS_{\mathrm{out}}(x\mid L)=|\mathcal{E}|/|\mathcal{V}|$.

\item \textbf{Estrada Index.}
To capture global (spectral) connectivity/complexity beyond local edges, we consider the Estrada index of the incompatibility graph, a walk-based spectral functional linked to matrix-exponential centrality and communicability~\cite{PenaGutmanRada2007,GutmanDengRadenkovic2011,ESTRADA201289}.
Let $A$ be the adjacency matrix of $\CG(x\mid L)$. Let $\{\lambda_i\}_{i=1}^{|\mathcal{V}|}$ be the eigenvalues of $A$ (the adjacency spectrum). The Estrada index is defined as
$
\mathtt{EE}(\CG(x\mid L))= \sum_{i=1}^{|\mathcal{V}|} e^{\lambda_i}.
$ We use
$
\CS_{\mathrm{EE}}(x\mid L)= \mathtt{EE}(\CG(x\mid L)).$

\end{itemize}

\begin{remark}
    Although we represent {InG} as a directed graph, incompatibility is a symmetric relation: for any $(u,v)\in\mathcal{E}$ we also have $(v,u)\in\mathcal{E}$ (and also in the weighted case, $w_{uv}=w_{vu}$). Therefore, $A$ is a real symmetric matrix with $A_{uv}=A_{vu}$ (taking value $1$ for unweighted edges and $w_{uv}\in[0,1]$ when weights are available), so its eigenvalues are all real.
\end{remark}

\paragraph{Logical Graph Uncertainty.}
We define \textbf{LGU} as $$\SGE(x\mid L)= \bigl(1+\CS(x\mid L)\bigr)\,\EE(x\mid L).$$

\paragraph{Design Rationale.} The multiplicative form $(1+\CS)\,\EE$ is the simplest operator satisfying four desiderata: (D1)~{reduction} to pure implication entropy when $\CS=0$ (guaranteed by the additive ``$1+$''); (D2)~{monotonicity} in $\CS\ge0$; (D3)~{scale coupling}, so that incompatibility amplifies uncertainty only in proportion to the dispersed mass---a confident answer ($\EE=0$) stays certain ($\CS\cdot 0=0$), which an additive form $\EE+\CS$ violates both behaviourally and dimensionally (it sums a nats-valued entropy with a dimensionless density); and (D4)~{boundedness}, since Theorem~\ref{thm:EEextreme} gives the structure-determined maximum $(1+\CS)\log|\mathcal{V}|$ needed for calibration. The dimensionless factor $(1+\CS)$ rescales the entropy $\EE$ directly, so $(1+\CS)\EE$ remains an entropy while staying parameter-free; the ablation in Appendix~\nameref{sec:ablation_study} confirms empirically that including the incompatibility factor improves over the IGE-only variant ($\CS=0$). A full statement and discussion of D1--D4 is given in Appendix~\nameref{app:design_rationale}.

LGU modulates implication-based uncertainty by incompatibility among the most specific (root) hypotheses. The following results hold for the instantiations of $\CS$ considered in this paper (i.e., edge density, weighted density, average degree, and Estrada index).

\begin{theorem}[Extremal Values]\label{thm:EEextreme}
Let $L$ be an LLM and $x$ a query. Then:
\begin{itemize}[leftmargin=*, topsep=0pt, itemsep=2pt]
\item $\SGE(x\mid L)\le (1+\CS(x\mid L))\log |\mathcal{V}(\LAG(x\mid L))|$, and equality holds if and only if the distribution over equivalent clusters is uniform, $|\mathcal{R}(\LAG(x\mid L))|=|\mathcal{V}(\LAG(x\mid L))|$ (i.e., the ImG is empty), and the InG is complete;
\item $\SGE(x\mid L)\ge 0$, and equality holds if and only if $|\mathcal{R}(\LAG(x\mid L))|=1$.
\end{itemize}
\end{theorem}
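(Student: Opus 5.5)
The plan is to reduce both bullets to standard facts about Shannon entropy, using the fact that the prefactor satisfies $1+\CS(x\mid L)\ge 1>0$. First I check $\CS\ge 0$ for every instantiation considered. Edge density and weighted density are sums of nonnegative quantities divided by positive counts. The average degree is $|\mathcal{E}|/|\mathcal{V}|\ge 0$. The Estrada index $\sum_i e^{\lambda_i}$ is strictly positive, since the adjacency matrix is real symmetric by the Remark and so its eigenvalues are real. Consequently $\SGE$ has the same sign and the same zero set as $\EE$, and $\SGE$ is monotone in $\EE$ with $\CS$ held fixed. I also record that every cluster in $C$ arises from a sampled answer, so $p_j>0$. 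Hence $p'(r_i)\ge p(r_i\text{'s own cluster})>0$, and after normalization every root receives strictly positive mass $p(r_i)$.

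For the lower bound, $\EE=-\sum_{i=1}^k p(r_i)\log p(r_i)\ge 0$ because it is a Shannon entropy of a proper distribution. The normalization makes it proper, and $k\ge 1$ by Theorem~\ref{thm:dag} together with Remark~\ref{rem:root}. Equality holds iff the distribution is a point mass. Since all $p(r_i)>0$, a point mass occurs iff $k=|\mathcal{R}(\LAG(x\mid L))|=1$. Multiplying by $1+\CS>0$ gives the second bullet.

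For the upper bound I chain two inequalities:
\[
\EE\le \log k \le \log|\mathcal{V}(\LAG(x\mid L))|.
\]
The first is the maximum-entropy bound, with equality iff $p(r_i)=1/k$ for all $i$. The second holds because roots are vertices, with equality iff every vertex is a root. That condition is equivalent to every in-degree being zero, i.e.\ the ImG being empty. In that case $\mathrm{Reach}(r_i)=\{r_i\}$, so $p'(r_i)=p_i$ and $\sum_j p'(r_j)=1$. The uniformity of $p(r_i)$ is then exactly uniformity of the cluster distribution, and conversely. Multiplying by $1+\CS$ gives $\SGE\le(1+\CS)\log|\mathcal{V}|$, with equality iff the cluster distribution is uniform and the ImG is empty. (When $|\mathcal{V}|=1$ both sides vanish and the conditions hold trivially.)

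The main obstacle is the extra condition ``InG complete'' in the first bullet. With $\CS$ appearing on the right-hand side, the equality analysis above does not need it: for fixed $\CS$, the factor $1+\CS$ cancels. I would therefore interpret that clause as describing when the bound itself is extremal over admissible incompatibility structures on a fixed root set. For edge density, $\CS_{\mathrm{dens}}\le 1$ with equality iff $|\mathcal{E}|=|\mathcal{V}|(|\mathcal{V}|-1)$, i.e.\ InG is complete. The same monotone-in-edges argument covers average degree and the Estrada index, using that $\mathrm{Tr}\,e^{A}$ increases entrywise in nonnegative $A$. I would prove this strengthened form $\SGE\le(1+\CS_{\max})\log|\mathcal{V}|$, with equality iff all three conditions hold, and state explicitly that for the un-maximized bound only the first two conditions are needed.
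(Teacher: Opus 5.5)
Your argument is the paper's: the same chain $\EE\le\log k\le\log|\mathcal{V}(\LAG(x\mid L))|$, the same observation that an empty ImG forces $\mathrm{Reach}(r_i)=\{r_i\}$ so that root-uniformity is cluster-uniformity, and the same strict-positivity-of-root-mass argument for the lower bound (the paper simply assumes full support, whereas you derive it from the clusters being sampled). The only divergence is the InG-completeness clause: the paper's proof dispatches it by noting that every $\CS$ instantiation is maximized exactly at a complete InG, but you are right that, with $\CS$ itself on the right-hand side, equality needs only uniformity and an empty ImG (two equiprobable, compatible, non-implicating clusters already give equality with an empty InG), so the literal ``only if'' is false and your $\CS_{\max}$ formulation is the correct way to make that clause hold.
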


Theorem~\ref{thm:EEextreme} characterizes the attainable range of $\SGE(x\mid L)$ graph-theoretically. The upper bound makes explicit the multiplicative contribution of $\CS$ and is attained in the maximally uncertain regime: a uniform distribution over equivalence classes, an empty implication subgraph ($|\mathcal{R}|=|\mathcal{V}|$), and a complete incompatibility subgraph on the roots; the lower bound $\SGE(x\mid L)\ge 0$ holds exactly when the root set is a singleton. Since the maximum $(1+\CS)\log|\mathcal{V}|$ is fixed by the graph structure given $\CS$, this bounded range lets scores be normalized in a principled way for ECE evaluation. Full proof is in Appendix~\nameref{app:proof_EEextreme}.

We next analyze the computational cost of computing $\SGE(x\mid L)$ from $n$ sampled candidate answers.

\begin{theorem}[Time Complexity]\label{thm:lgu_complexity}
Let the algorithm take $n$ candidate answers as input. Assume:
(i) equivalence, implication, and incompatibility primitives are implemented via pairwise checks,
and (ii) the chosen InS $\CS(\cdot)$ can be computed on any directed graph with
$k$ vertices in $\mathcal{O}(k^2)$ time.
Then computing $\SGE(x\mid L)=(1+\CS(x\mid L))\,\EE(x\mid L)$ takes $\mathcal{O}(n^2)$ time in the worst case,
and this bound is tight.
\end{theorem}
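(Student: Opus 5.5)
The plan is to walk through the pipeline that computes $\SGE(x\mid L)$ stage by stage, bound each stage by $\mathcal{O}(n^2)$, and then give a matching lower bound. Each primitive (equivalence, implication, incompatibility check on an ordered pair of answers) is charged $\mathcal{O}(1)$ by assumption (i).

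\textbf{Clustering.} Compare every ordered pair of the $n$ answers for bidirectional implication and group them into equivalence classes. This takes $\mathcal{O}(n^2)$ primitive calls, plus near-linear union--find bookkeeping, and produces $t\le n$ clusters with masses $p_j$, summed in $\mathcal{O}(n)$.

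\textbf{Implication graph.} Build $\LAG(x\mid L)$ by testing implication between every ordered pair of cluster representatives, which is $\mathcal{O}(t^2)\subseteq\mathcal{O}(n^2)$ checks. Roots are read off as the vertices of in-degree zero in $\mathcal{O}(t+|\mathcal{E}|)=\mathcal{O}(n^2)$.

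\textbf{Aggregated masses.} This is the step I expect to be the main obstacle, since naively running a BFS from each of $k$ roots costs $\mathcal{O}(k(t+|\mathcal{E}|))$, which can be cubic. Proposition~\ref{pro:entailment} removes this. Because implication is transitive and the graph faithfully encodes every implication between clusters, $\mathcal{E}$ is transitively closed. Hence $\mathrm{Reach}(r_i)=\{r_i\}\cup N^{+}(r_i)$, and $p'(r_i)=p_{r_i}+\sum_{c_j\in N^{+}(r_i)}p_j$ can be computed by a single scan of the row of $r_i$ in the adjacency matrix. That gives $\mathcal{O}(kt)\subseteq\mathcal{O}(n^2)$ in total. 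Normalization and the entropy $\EE$ then cost $\mathcal{O}(k)$. If one does not want to rely on the closure holding exactly for the NLI extractor, the fallback is to note that the transitivity check is itself part of the model assumption, so the closed form is what the definition computes.

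\textbf{Incompatibility graph and score.} Build $\CG(x\mid L)$ on the $k$ roots with $\mathcal{O}(k^2)$ pairwise incompatibility (or weight) queries. By assumption (ii), $\CS$ is computed in $\mathcal{O}(k^2)$. The final product is $\mathcal{O}(1)$. Summing the stages gives $\mathcal{O}(n^2)$ overall.

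\textbf{Tightness.} Take $n$ pairwise non-equivalent, mutually compatible answers, for instance an empty implication graph. Any correct algorithm must distinguish this instance from ones in which a single unqueried pair is equivalent or implicational, since such a pair changes the clusters or roots and hence changes $\EE$. An adversary argument then shows that each of the $\binom{n}{2}$ pairs must be queried in the worst case, so $\Omega(n^2)$ primitive calls are needed. For example, with a uniform distribution, merging two singletons changes the entropy from $\log n$ to a strictly smaller value. This establishes that the bound is tight.
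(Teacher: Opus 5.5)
Your argument is correct and has the same skeleton as the paper's proof: quadratic clustering, quadratic ImG construction, root aggregation, $\mathcal{O}(k^2)$ InG and $\CS$, then a quadratic lower bound. You justify two steps differently, and in both cases more carefully.

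For the aggregation, the paper just asserts that a topological traversal visiting each vertex and edge once yields all $p'(r_i)$ in $\mathcal{O}(t+m)$. You instead use Proposition~\ref{pro:entailment} to observe that $\mathcal{E}$ is transitively closed, so $\mathrm{Reach}(r_i)=\{r_i\}\cup N^{+}(r_i)$ and each root mass is one row scan. That closure property is exactly what the paper's claim implicitly needs. On a DAG that is not transitively closed, a bottom-up accumulation double-counts shared descendants (already in a diamond), and per-root search costs $\mathcal{O}(k(t+m))$, which can be cubic. For the same reason, your closing ``fallback'' sentence is not an alternative route: the quadratic bound genuinely rests on transitivity, and it is cleaner to say so.

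For tightness, the paper only argues that recovering the clustering needs $\Theta(n^2)$ inspections. That does not by itself bound an algorithm that outputs only the scalar $\SGE$. Your adversary argument closes this gap by exhibiting two instances, consistent with every answered query, whose $\SGE$ values differ. All answers are compatible, so $\CS=0$ in both, and merging two uniform singletons lowers $\EE$ from $\log n$ to $\log n-\tfrac{2}{n}\log 2$.
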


\section{Experiments}
\label{sec:exp}

We systematically validate our method’s LLM generation uncertainty estimation performance across diverse QA benchmarks and mainstream LLMs. All details of the following experimental section can be found in Appendix~\nameref{app:experiments_details}.

\paragraph{Datasets.} To comprehensively evaluate our method, we conduct experiments across four question-answering datasets spanning three diverse categories. For General Knowledge, we utilize \textbf{TriviaQA} \cite{triviaqa} and \textbf{SQuAD} \cite{squad}; for Specialized Domains, we employ \textbf{BioASQ} \cite{bioasq} within the biomedical field; and for search queries, we use NQ \cite{nq} to represent real-world user search behavior.

\begin{table}[t]
    \centering
    \caption{\textbf{Logical Question Answer Rate.} Non-empty ImG suitable for LGU.}
    \label{tab:logical_rate}
    \renewcommand{\arraystretch}{1.1}
    \scalebox{0.75}{
        \begin{tabular}{@{\,}lccccc@{\,}}
        \toprule
        \textbf{Model} & \textbf{Trivia QA} & \textbf{SQuAD} & \textbf{BioASQ} & \textbf{NQ} \\
        \midrule
        Llama-2-7b-chat    & 0.483 & 0.890 & 0.773 & 0.773 \\
        Falcon-7b-instruct & 0.903 & 0.990 & 0.965 & 0.988 \\
        Falcon-7b          & 0.873 & 0.993 & 0.960 & 0.985 \\
        Mistral-7B-v0.1    & 0.750 & 0.980 & 0.923 & 0.975 \\
        Llama-2-13b-chat   & 0.393 & 0.833 & 0.708 & 0.730 \\
        Qwen3-32B          & 0.623 & 0.930 & 0.753 & 0.885 \\
        Llama-3-8B         & 0.578 & 0.955 & 0.850 & 0.953 \\
        gemma-2-9b         & 0.523 & 0.858 & 0.620 & 0.780 \\
        \cmidrule{2-5}
        Average Rate        & 0.641 & 0.929 & 0.807 & 0.884 \\
        \bottomrule
    \end{tabular}}
\end{table}

\paragraph{Models.} We evaluate a range of modern language models, including \textbf{Llama} \cite{llama2}, \textbf{Falcon} \cite{falcon}, \textbf{Mistral} \cite{mistral}, \textbf{Qwen} \cite{qwen3}, and \textbf{Gemma} \cite{gemma2}. Additionally, we use \textbf{DeBERTa-Large-MNLI} \cite{deberta} as our NLI model to construct logical graphs and semantic clusters. To assess the correctness of model generations, we employ GPT-3.5 \cite{gpt3.5} as the reference judge.

\begin{figure*}[htbp]
    \centering

    \begin{subfigure}[t]{0.48\linewidth}
        \centering
        \includegraphics[width=\linewidth]{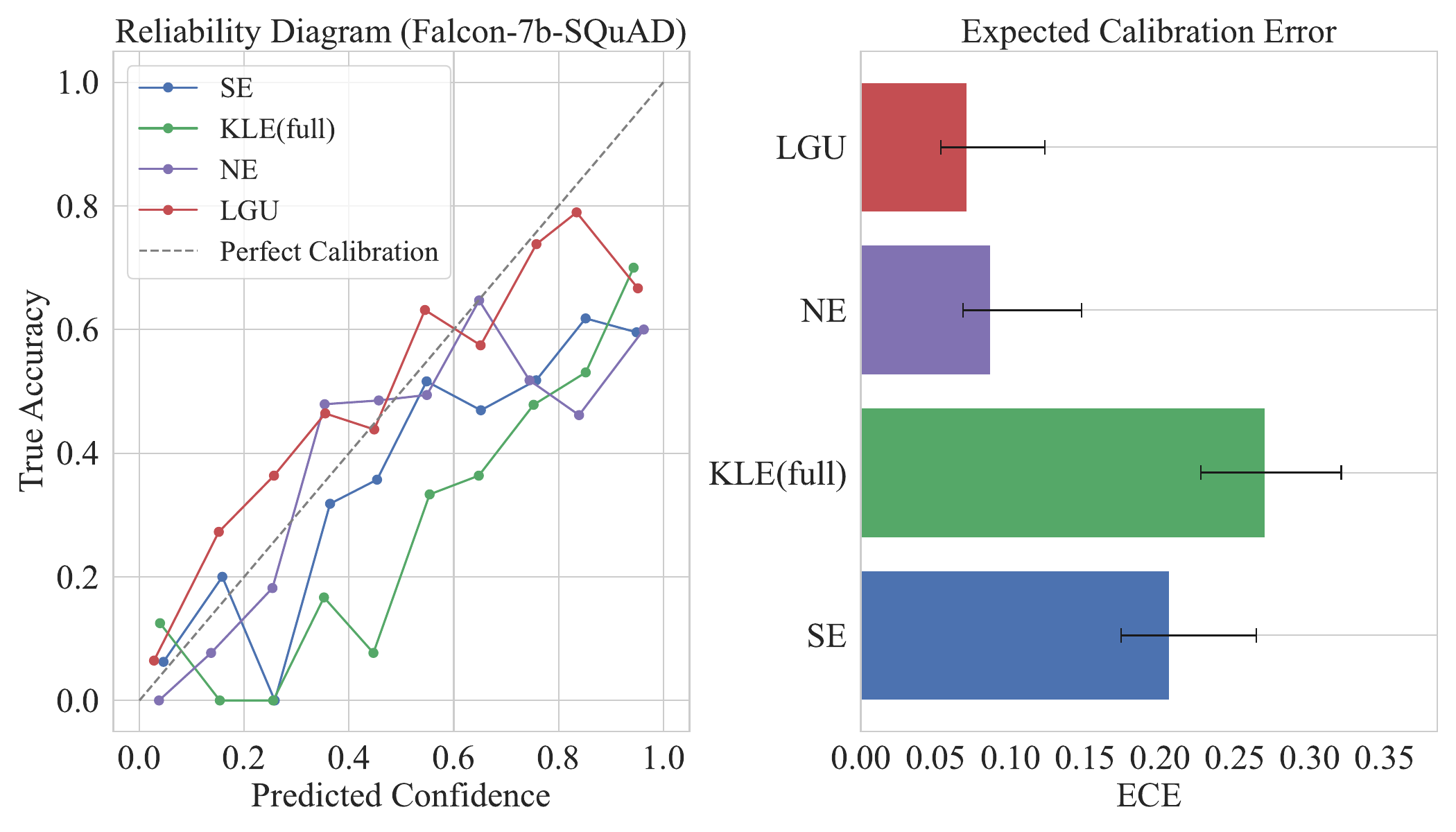}
        \caption{Calibration Analysis on Falcon-7B (SQuAD).}
        \label{fig:falcon7b_squad_calibration}
    \end{subfigure}
    \hfill
    \begin{subfigure}[t]{0.48\linewidth}
        \centering
        \includegraphics[width=\linewidth]{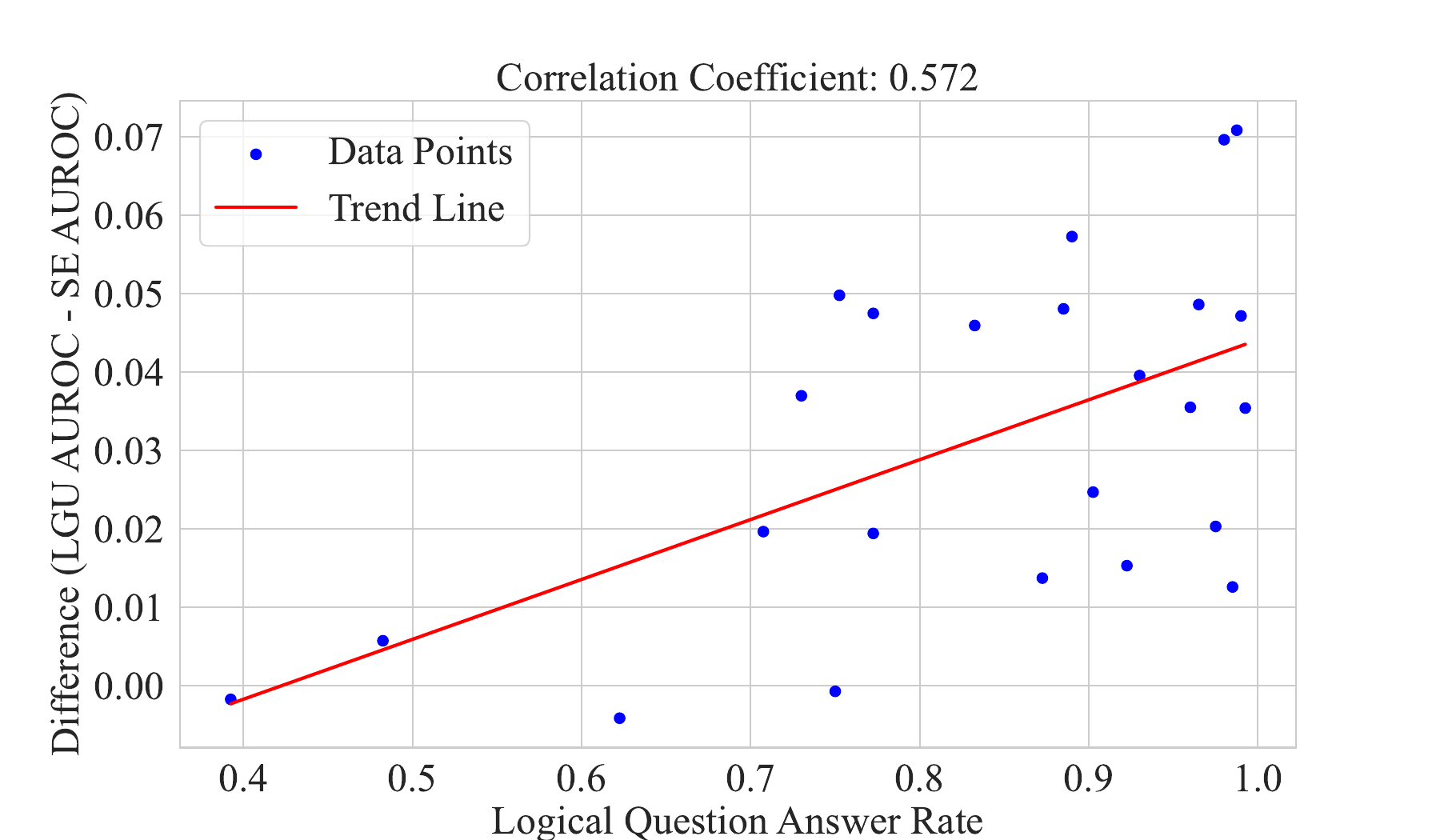}
        \caption{Correlation Between Logical Structure Prevalence and LGU Gain.}
        \label{fig:lgu_se_correlation}
    \end{subfigure}

    \caption{
    \textbf{Effectiveness and Structural Sensitivity of LGU.}
    \textbf{(a) Calibration of Falcon-7b on SQuAD.} \textit{Left:} Reliability diagrams aligning predicted confidence with empirical accuracy. \textit{Right:} Expected Calibration Error (ECE, $\downarrow$). \textbf{(b) LGU performance gain vs.\ logical structure prevalence.} Correlation between the AUROC improvement over SE ($\Delta$AUROC) and the proportion of answers containing logical dependencies (non-empty ImG).
    }
    \label{fig:lgu_combined}
\end{figure*}

\paragraph{Baselines.}
We evaluate against three baseline categories: \textit{semantic methods}, including Semantic Entropy \cite{se,se_nature} and Kernel Language Entropy (KLE) \cite{kle}; \textit{token-level metrics}, comprising Maximum Sequence Probability (MSP), Perplexity (PPL) \cite{perplexity}, and \(\text{Avg}/\text{Max}(-\log p)\) \cite{selfcheck}; and a \textit{supervised} Embedding Regression baseline \cite{sequence1}. Please refer to Appendix~\nameref{app:experiments_details} for exact baseline formulations and implementation details.

\paragraph{Evaluation Metrics.}
We evaluate performance using three standard metrics: \textit{discrimination} via AUROC \cite{se_nature}, {practical utility} for selective prediction via AUARC \cite{auarc}, and {calibration} reliability via ECE \cite{ece}. 
Appendix~\nameref{app:metric} shows our calibration evaluation is robust to the normalization protocol, with LGU the most frequently best-calibrated method under both theoretical and empirical bounds.


\subsection{Results}



Our empirical evaluation demonstrates that LGU offers strong discriminative power, robust cross-model generalization, and reliable calibration. To characterize this honestly, across the $96$ model$\times$dataset$\times$metric combinations LGU attains the best {absolute} value in roughly half, while ranking first on {average} across all settings---the most consistent performer overall. All detailed results are provided in Appendix~\nameref{app:generate}.

\paragraph{Discrimination and Calibration Results.} As detailed in Table~\ref{tab:falcon_results}, on the Falcon-7b-instruct model LGU attains the strongest average rank in both discrimination and calibration, achieving the best AUROC and AUARC across the four datasets and outperforming strong semantic-level baselines such as SE and KLE (e.g., on NQ it reaches AUROC $0.752$ and AUARC $0.533$); on TriviaQA its margin over KLE(Full) and SE is small and within noise.

\paragraph{Cross-Model Robustness and Stability.} We evaluate stability across six diverse LLM families. Figure~\ref{fig:radar_chart} visualizes the rank-based performance of each method on SQuAD via radar charts, where a larger shaded area corresponds to a higher and more consistent relative ranking. LGU (blue area) encompasses the outermost perimeter in both the AUROC and AUARC plots, confirming robust discriminative power regardless of model scale (7B--13B) or architecture (Llama, Mistral, Gemma, Falcon).

\paragraph{Superior Calibration of LGU.} Figure~\ref{fig:falcon7b_squad_calibration} uses Falcon-7b on SQuAD as a representative example. The reliability diagram shows LGU (red curve) aligning most closely with the perfect-calibration line, and quantitatively LGU achieves the lowest ECE ($0.071$), outperforming NE ($0.087$), KLE ($0.270$), and SE ($0.206$)---indicating better-calibrated estimates in the settings where its advantage is statistically clear (non-overlapping reliability curves and error bars).

\subsection{Ablation Study}
\paragraph{Prevalence of Logical Structure Amplifies LGU's Advantage.} To investigate the source of LGU's improvement over SE, we analyze the relationship between the {Non-empty ImG Rate} (Table~\ref{tab:logical_rate}) and the performance gain across data subsets. The correlation analysis in Figure~\ref{fig:lgu_se_correlation} plots the proportion of samples whose ImG contains implication edges (x-axis) against the AUROC improvement of LGU over SE (y-axis): the upward regression trend with a Pearson coefficient of $0.572$ indicates a moderate-to-strong positive correlation. Where the construction rate is low ($<$0.5) the gap is negligible, whereas at high rates ($>$0.7) LGU's advantage becomes pronounced, with AUROC gains up to $0.07$---confirming that LGU capitalizes on structural information SE ignores.

\paragraph{Scope: LGU is not restricted to ``logical'' questions.} A non-empty ImG is {not} a precondition for applying LGU. The relevant distinction is not ``logical'' versus ``non-logical'' questions, but whether a query admits {a single atomic correct answer} or {multiple correct answers standing in logical relations} (entailment or incompatibility); the latter case is pervasive among open-ended or under-specified queries, which is precisely where semantic-only measures conflate benign diversity with uncertainty. When a query has a unique atomic answer and the samples are identical or logically unrelated, the ImG carries no edges and LGU reduces {exactly} to SE. LGU therefore never applies to a narrower set of inputs than SE---it strictly adds signal when logical structure is present and is otherwise identical---so the rates in Table~\ref{tab:logical_rate} quantify {how often this extra signal is available}, not where the method is valid.
\paragraph{Role of Incompatibility in Uncertainty Quantification.} To rigorously validate the architectural choices of our framework, we provide a comprehensive ablation study in Appendix~\nameref{sec:ablation_study}. By comparing LGU against reduced baselines (e.g., IGE), we demonstrate that modeling implication alone is insufficient. The study confirms that constructing the InG with an Edge Density score provides the necessary signal to identify conflicting hallucinations, empirically justifying our default configuration.



\section{Related Work}

Early UQ methods derive sequence-level confidence from token probabilities
via metrics such as average log-probability, perplexity, or token entropy
\cite{sequence1,xiao2021,fadeeva2024}, but operate at the lexical level,
conflating surface variation with genuine uncertainty, with
calibration degrading under RLHF fine-tuning \cite{prompt2}. Semantic-level
methods instead cluster answers into meaning-equivalent groups: SE
\cite{se,se_nature} computes entropy over NLI-derived equivalence classes,
and KLE \cite{kle} extends this via kernels at the cost of extra
hyperparameters; others add logit-based scoring \cite{semantic-energy} or
self-evaluation prompts \cite{prompt2}, though the latter tend to be
overconfident and underperform token-based methods \cite{ni2024}. Beyond
flat clustering, structured relational representations have recently entered
LLM UQ: \cite{jiang2024} build a response--claim bipartite
graph and score uncertainty via centrality (degree, PageRank), while
\cite{zhao2026} define structural entropy over directed semantic dependency
graphs; for long-form outputs, LUQ \cite{luq2024} aggregates claim-level
entailment consistency and LoGU \cite{logu2025} distinguishes supported,
unsupported, and contradicted claims. Unlike all of these, LGU explicitly
models truth-conditional implication and mutual incompatibility among
hypotheses, moving beyond flat semantic clustering toward a structured
logical hypothesis space.

\section{Discussion}
\label{sec:discussion}

This work addresses a fundamental limitation in current uncertainty quantification for large language models: the inability to capture the logical structure underlying model responses. We propose a logical-graph-based framework centered on two complementary quantities---IGE, which aggregates probability mass along entailment chains in an ImG, and LGU, which further penalizes dense mutual incompatibility among the roots via an InG. Across multiple benchmarks and model families, LGU substantially improves identification of unreliable outputs---especially in open-ended settings where correct answers are logically rich rather than trivially repetitive---promoting logical coherence to a first-class role alongside semantic equivalence.

\paragraph{Limitation.} First, the framework supports only implication and pairwise incompatibility, not richer relations such as causality, temporal order, or defeasible inference. Second, we operationalize these relations with an off-the-shelf NLI model---a proxy for {textual} rather than logical entailment---whose errors propagate to the graphs; a purpose-built extractor would benefit LGU, though our gains require no logical inference, stemming only from the {directional} entailment and {contradiction} signals the NLI model already produces but SE discards. Third, our evaluation targets short-form QA; extending LGU to long-form generation raises {alignment} and {probability-assignment} obstacles (detailed in Appendix~\nameref{app:longform}) that also make graph construction costly, motivating hierarchical or approximate inference. Finally, LGU's gains concentrate on logically structured queries and otherwise coincide with SE (Table~\ref{tab:logical_rate}), adding signal without ever underperforming SE.
\bibliography{LGU}


\clearpage
\appendix
\section{Proofs}

\subsection{Proof of Theorem~\ref{thm:EEextreme}}
\label{app:proof_EEextreme}

\begin{proof}
We prove the upper and lower bounds separately, and then characterize the equality cases.

{Upper bound.}
By definition,
\[
\SGE(x\mid L) = \bigl(1+\CS(x\mid L)\bigr)\,\EE(x\mid L).
\]
Since $\EE(x\mid L)$ is the Shannon entropy of a distribution on the roots
$\mathcal{R}(\LAG(x\mid L))=\{r_1,\dots,r_k\}$, we have
\[
\EE(x\mid L)\le \log k \le \log |\mathcal{V}(\LAG(x\mid L))|.
\]
Therefore,
\[
\SGE(x\mid L) \le \bigl(1+\CS(x\mid L)\bigr)\log |\mathcal{V}(\LAG(x\mid L))|.
\]

To characterize when equality holds, note that $\SGE(x\mid L)$ equals the right-hand side
if and only if $\EE(x\mid L)=\log|\mathcal{V}(\LAG(x\mid L))|$.
This requires both (i) $k=|\mathcal{V}(\LAG(x\mid L))|$ and (ii) the root distribution $\{p(r_i)\}_{i=1}^k$
to be uniform. Condition (i) is equivalent to $\mathcal{R}(\LAG(x\mid L))=\mathcal{V}(\LAG(x\mid L))$,
i.e., every vertex is a root and the ImG has no edges.
In this case, $\mathrm{Reach}(r_i)=\{r_i\}$ for all $i$, so $p(r_i)=p_i$ and
$\EE(x\mid L)$ reduces to the entropy of the original cluster distribution; hence (ii) holds
if and only if the distribution over equivalent clusters is uniform.
Finally, since all instantiations of $\CS$ considered in this paper achieve their
maximum value exactly when InG is complete, the stated equality condition follows.

{Lower bound.}
Since $\CS(x\mid L)\ge 0$ by definition of an InS and $\EE(x\mid L)\ge 0$ for Shannon entropy,
it follows immediately that
\[
\SGE(x\mid L) = (1+\CS(x\mid L))\,\EE(x\mid L)\ge 0.
\]
If $|\mathcal{R}(\LAG(x\mid L))|=1$, then the induced distribution on roots is a point mass and
$\EE(x\mid L)=0$, which implies $\SGE(x\mid L)=0$.

Conversely, suppose $\SGE(x\mid L)=0$. As $1+\CS(x\mid L)>0$, we must have $\EE(x\mid L)=0$.
Under the standard full-support assumption on clusters (each induced cluster has positive probability),
every root $r_i$ has $p(r_i)>0$, hence $\EE(x\mid L)=0$ can occur only when $k=|\mathcal{R}(\LAG(x\mid L))|=1$.
This completes the proof.
\end{proof}

\subsection{Proof of Theorem~\ref{thm:lgu_complexity}}\label{app:proof_lgu_complexity}
\begin{proof}
We bound each stage and then combine the results.

{Equivalent Clustering.}
Forming equivalent clusters from $n$ sampled answers via pairwise equivalence checks examines
$\Theta(n^2)$ pairs in the worst case, hence takes $\Theta(n^2)$ time.

{ImG Construction and IGE Aggregation.}
Let $t$ be the number of clusters. In the worst case, $t=\Theta(n)$ (each answer forms its own cluster).
Constructing the ImG $\LAG(x\mid L)$ by pairwise implication checks over clusters costs
$\Theta(t^2)\subseteq \mathcal{O}(n^2)$ time.
Once $\LAG(x\mid L)$ is built, aggregating probabilities and computing $\EE(x\mid L)$ can be done by a
topological traversal that processes each vertex and edge once, i.e., $\mathcal{O}(t+m)$ time,
where $m=|\mathcal{E}(\LAG(x\mid L))|$.
Since $\LAG(x\mid L)$ is a DAG, $m\le t(t-1)/2=\Theta(t^2)$, hence $\mathcal{O}(t+m)\subseteq \mathcal{O}(t^2)
\subseteq \mathcal{O}(n^2)$.

{InG Construction and InS.}
Let $k=|\mathcal{R}(\LAG(x\mid L))|$ be the number of roots; then $1\le k\le t\le n$.
Constructing the incompatible graph $\CG(x\mid L)$ by pairwise incompatibility checks over roots costs
$\Theta(k^2)\subseteq \mathcal{O}(n^2)$ time.
By assumption, the chosen score $\CS(x\mid L)$ is computable on a $k$-vertex digraph in $\mathcal{O}(k^2)$ time,
which is again bounded by $\mathcal{O}(n^2)$.

The final scalar combination $\SGE(x\mid L)=(1+\CS(x\mid L))\,\EE(x\mid L)$ takes $\mathcal{O}(1)$ time. Summing the stages, the overall running time is mainly dominated by the quadratic pairwise checks, yielding
$\mathcal{O}(n^2)$ in the worst case.

{Tightness.}
In the worst case, $\Omega(n^2)$ pairwise relations must be examined: e.g., when no two answers are
equivalent, determining the clustering requires inspecting \(\Theta(n^2)\) pairs. Therefore the
\(\mathcal{O}(n^2)\) is tight.
\end{proof}

\section{Design Rationale for the Multiplicative Form}
\label{app:design_rationale}

Here we give the full statement of the four desiderata that single out the multiplicative operator $(1+\CS)\,\EE$ among elementary ways of combining an implication-based entropy $\EE$ with an InS $\CS$.

\begin{itemize}[leftmargin=*, topsep=2pt, itemsep=3pt]
\item \textbf{(D1) Reduction.} When the roots carry no mutual incompatibility ($\CS=0$), LGU must reduce to the pure implication-based entropy, $\SGE=\EE$. The additive ``$1+$'' guarantees this rather than zeroing the score out.
\item \textbf{(D2) Monotonicity.} For fixed dispersion $\EE$, denser incompatibility among roots should never decrease uncertainty. Since $\CS\ge 0$ is nondecreasing in the incompatibility edge set for every instantiation we use, $(1+\CS)\EE$ is monotone in $\CS$.
\item \textbf{(D3) Scale coupling.} Incompatibility should amplify uncertainty only in proportion to how much mass is actually dispersed---if the model commits to a single root ($\EE=0$), no amount of incompatibility should manufacture uncertainty. A multiplicative coupling enforces this ($\CS\cdot 0=0$), whereas an additive form $\EE+\CS$ would spuriously report uncertainty for a confident answer. The additive form is moreover {dimensionally inconsistent}: $\EE$ is a Shannon entropy carrying units of nats (or bits), while $\CS$ is a dimensionless graph-density statistic, so the bare sum $\EE+\CS$ adds incommensurable quantities. The factor $(1+\CS)$ is a constant: being dimensionless, it rescales the entropy $\EE$ directly, so $(1+\CS)\EE$ remains an entropy while staying principled and parameter-free.
\item \textbf{(D4) Boundedness/calibratability.} The score must admit a structure-determined maximum so it can be normalized for calibration. Theorem~\ref{thm:EEextreme} shows $(1+\CS)\,\EE$ is bounded by $(1+\CS)\log|\mathcal{V}|$ with explicit extremal configurations, which additive or higher-order couplings do not yield as cleanly.
\end{itemize}

These four properties single out $(1+\CS)\,\EE$ among elementary combinations: the additive coupling $\EE+\CS$ violates (D3) both behaviourally (manufacturing uncertainty when $\EE=0$) and dimensionally (summing incommensurable units unless a dimensioned hyperparameter is reintroduced), while the ablation in Appendix~\nameref{sec:ablation_study} confirms empirically that including the incompatibility factor improves over the IGE-only variant ($\CS=0$).

\section{Extending LGU to Long-Form Generation}
\label{app:longform}

Our evaluation targets short-form QA, where each answer is a single atomic proposition with a directly available probability. Extending LGU to long-form or multi-sentence generation raises two obstacles. The first is {alignment}: a long-form response must be decomposed into constituent claims and those claims matched across responses, a correspondence that is neither one-to-one nor context-free, since a single claim may be split, merged, or restated across generations. The second is {probability assignment}: token-derived sequence probabilities are sensitive to length and phrasing, so no stable probability mass attaches to a multi-sentence hypothesis, undermining the mass-aggregation step that IGE relies on. Both obstacles also make graph construction substantially more costly as the number of claims grows, motivating hierarchical decomposition or approximate inference over the implication and incompatibility graphs. We leave these extensions to future work.

\section{Experiment Details and Ablations}
\label{app:exp}

\subsection{Experiments Details}
\label{app:experiments_details}

\begin{table*}[ht]
\caption{\textbf{Prompt Templates for Evaluating Answer Correctness and Semantic Entailment.} Placeholders like \texttt{<Question>} are replaced with specific instances during experiments.
}
\label{tab:prompt_templates}
\centering
\begin{tabularx}{\linewidth}{llX}
\toprule
\textbf{No.} & \textbf{Task / Objective} & \textbf{Prompt Template} \\
\midrule
1 & \textbf{Evaluating Correctness} & 
\texttt{We are evaluating answers to the question: <Question>}\newline
\texttt{Here are two possible answers:}\newline
\texttt{Possible Answer 1: <Predicted Answer>}\newline
\texttt{Possible Answer 2 set: <Correct Answer(s)>}\newline
\texttt{\rule{0pt}{1.2em}} 
\texttt{Within the context of the question, does the Predicted Answer express the same meaning as any answer in the Reference Answer Set (even if worded differently)? Consider paraphrasing, synonyms, and equivalent expressions as correct. If they mean the same, respond with yes. If they are different in meaning, respond with no.}\newline
\texttt{Response:}
\\ \midrule
2 & \textbf{Brief Answer Generation} & 
\texttt{Answer the following question in a single brief but complete sentence.}\newline
\texttt{Question: <Question>}\newline
\texttt{Answer:}
\\
\bottomrule
\end{tabularx}
\end{table*}

\paragraph{Implementation Details.}
Our study leverages the open-source models suite alongside the DeBERTa-Large-MNLI model, both accessed through the HuggingFace Transformers framework to ensure reproducibility. Our experimental setup consists of a single server equipped with four NVIDIA RTX 4090 GPUs. All models were implemented in PyTorch 2.7 and trained using CUDA 12.2. We have made the core code publicly available. Please refer to \url{https://anonymous.4open.science/r/lgu-67E0} for details.

\begin{itemize}
    \item To obtain the final answer used for accuracy evaluation, we employ a deterministic decoding strategy by disabling sampling (\texttt{do$\_$sample = False}) and setting the temperature to zero (temperature = 0.1). This ensures that the model always returns the highest-probability response, providing stable and reproducible outputs.
    \item For uncertainty estimation, we generate a set of candidate answers using stochastic sampling with (do\_sample = True) and a standard temperature setting (temperature = 1). This sampling-based approach produces diverse outputs whose variability reflects the model's predictive uncertainty. All answer generation prompt templates are referenced from Table~\ref{tab:prompt_templates}.
    \item Statistical Significance: We report 95\% confidence intervals for all evaluated metrics (AUROC, AUARC, ECE). These intervals are estimated using a bootstrapping procedure ($n_\text{bootstrap}=1000$, seed=42) over the test samples to ensure the robustness of our uncertainty estimations against data variance.
\end{itemize}


\paragraph{Baselines.}
We benchmark our method against a comprehensive suite of baselines, categorized by their underlying uncertainty mechanism.

\begin{itemize}
    \item \textbf{Semantic Uncertainty Methods:} These methods estimate uncertainty by considering the meaning of the generated text, rather than just its surface form.
    \begin{itemize}
        \item \textbf{Semantic Entropy}: Measures uncertainty over a distribution of semantically equivalent paraphrases for a given answer. It conceptually calculates the entropy of this distribution, where higher entropy implies greater semantic ambiguity.
        \item \textbf{Discrete Semantic Entropy}: A variant of semantic entropy that operates over a finite set of pre-defined semantic clusters instead of generating paraphrases, calculated without using the token-level log-likelihoods.
        \item \textbf{Kernel Language Entropy (KLE)} : A sophisticated method that estimates uncertainty using semantic similarity kernels and von Neumann entropy. It provides a more fine-grained semantic uncertainty quantification than standard semantic entropy. We adopt the default hyperparameters \((t = 0.3, \alpha = 0.5, \nu = 1, \kappa = 1)\) from the original work, which demonstrated robust performance.
    \end{itemize}

    \item \textbf{Token-level Uncertainty Methods:} These methods directly leverage the model's token-level output probabilities. Let \(y = (y_1, \dots, y_T)\) be a generated sequence of length \(T\) for an input \(x\). The probability of the sequence is given by the autoregressive factorization \(P(y|x) = \prod_{t=1}^{T} p(y_t | y_{<t}, x)\).
    \begin{itemize}
        \item \textbf{Maximum Sequence Probability (MSP)}: A direct measure of model confidence. The uncertainty is defined as one minus the sequence probability:
        \[
            \mathcal{U}_{\text{MSP}}(y|x) = 1 - P(y|x) = 1 - \prod_{t=1}^{T} p(y_t | y_{<t}, x)
        \]
        While simple, this metric can be insensitive as the product of many small probabilities quickly approaches zero.
        
        \item \textbf{Perplexity (PPL)}: Measures how well the probability distribution of a language model predicts a sample. It is the exponentiated average negative log-likelihood. A lower PPL indicates higher confidence.
        \[
            \text{PPL}(y|x) = \exp\left(-\frac{1}{T} \sum_{t=1}^{T} \log p(y_t | y_{<t}, x)\right)
        \]
        
        \item \textbf{Average and Max Negative Log-Probability}: These metrics, often denoted as \(\text{Avg}(-\log p)\) and \(\text{Max}(-\log p)\), isolate components of perplexity. \(\text{Avg}(-\log p)\) is the sequence-level cross-entropy loss, while \(\text{Max}(-\log p)\) identifies the point of maximum uncertainty within the sequence.
        \begin{align}
            \mathcal{U}_{\text{AvgNLL}}(y|x) &= -\frac{1}{T} \sum_{t=1}^{T} \log p(y_t | y_{<t}, x) \\
            \mathcal{U}_{\text{MaxNLL}}(y|x) &= \max_{1 \le t \le T} \left( -\log p(y_t | y_{<t}, x) \right)
        \end{align}
    \end{itemize}

    \item \textbf{Supervised \& Calibrated Methods:} These approaches require additional training or specialized prompting to estimate correctness.
    \begin{itemize}
        \item \textbf{Embedding Regression}: A supervised method that trains a simple classifier, such as a logistic regression model, on the final-layer hidden states (\(h_T\)) of the LLM to directly predict the probability of answer correctness.
    \end{itemize}
\end{itemize}


\paragraph{Evaluation Metrics.}
We assess all methods using three standard metrics, each evaluating a distinct aspect of uncertainty quality: discrimination, practical utility, and calibration.

\begin{itemize}
    \item \textbf{AUROC:} 
    Following prior work, we use AUROC to measure an uncertainty score's ability to rank incorrect predictions higher than correct ones. It plots the true positive rate against the false positive rate across all possible thresholds, with a score of 1.0 indicating perfect discrimination.

    \item \textbf{AUARC:}
    This metric evaluates the practical utility of uncertainty for selective prediction. It measures the area under the curve formed by plotting the accuracy on non-rejected samples versus the fraction of samples rejected, as the rejection threshold is varied. A higher AUARC signifies a better trade-off between accuracy and coverage.

    \item \textbf{ECE:}
    ECE measures the discrepancy between a model's confidence and its empirical accuracy, quantifying how well-calibrated the uncertainty scores are. Predictions are partitioned into \(M\) bins \(B_m\) based on their confidence scores. The ECE is the weighted average of the absolute difference between the average confidence and accuracy in each bin:
    \[
        \text{ECE} = \sum_{m=1}^{M} \frac{|B_m|}{N} \left| \text{acc}(B_m) - \text{conf}(B_m) \right|
    \]
    where \(N\) is the total number of samples, \(\text{acc}(B_m)\) is the fraction of correct predictions in bin \(B_m\), and \(\text{conf}(B_m)\) is the average confidence of predictions in that bin. A lower ECE indicates better calibration. 
    
    We adopt an empirical \textbf{min-max normalization protocol} based on the observed data range. While ECE was originally designed for probabilistic predictions, its computation only requires a confidence score in $[0,1]$ rather than strictly calibrated class probabilities.    
    In our setting, uncertainty estimates are mapped to confidence scores via the monotonic transformation $c = 1 - (u - u_{\min}) / (u_{\max} - u_{\min})$, which preserves the relative ordering of predictions while enabling calibration analysis. This evaluation protocol is standard in LLM uncertainty quantification; recent surveys explicitly identify ECE as a benchmark for assessing whether uncertainty estimates behave as reliable confidence signals, even when they are not calibrated probabilities 
    \cite{shorinwa2025survey}.
\end{itemize}

\subsection{Generalization Study}
\label{app:generate}

Table~\ref{tab:main_results}  presents the complete experimental results across all dataset-model pairs. The table provides a comprehensive comparison of each method under all evaluation metrics, further substantiating the robustness and generalizability of LGU. These results confirm that our approach consistently outperforms baselines in both semantic and token-level uncertainty estimation, demonstrating its effectiveness in diverse settings.

\paragraph{Average Rank Comparison across Models and Datasets.} While absolute metrics (AUROC, AUARC, and ECE) provide detailed performance indicators for individual model-dataset pairs, absolute scores can be heavily influenced by the inherent capabilities of different LLMs (e.g., modern models like LLaMA-3-8B naturally achieve a higher baseline AUROC than earlier models like Falcon-7B). To provide a rigorous, macro-level evaluation of robustness and to mitigate the variance introduced by disparate baseline performances, we conduct a comprehensive rank-based comparison.

For each specific evaluation setting (i.e., a combination of a model and a dataset), we rank the evaluated uncertainty estimation methods from best (rank 1) to worst. For discriminative metrics (AUROC and AUARC), higher numerical values receive better (lower) ranks; for calibration metrics (ECE), lower numerical values receive better ranks. We then average these ranks across all evaluated LLMs and benchmarks from our generalization study to compute the average rank per metric. The \textit{Overall Avg Rank} is the mean of the ranks across all three metrics. 

The average ranking results are summarized in Table \ref{tab:avg_rank}. We observe the following key findings:
\begin{itemize}
    \item \textbf{LGU Achieves State-of-the-Art Robustness:} Our proposed LGU achieves the best (lowest) average rank on every metric: AUROC (1.5417), AUARC (1.6250), and ECE (2.1250). With an Overall Avg Rank of 1.7639, LGU is the best-ranked method on average across the evaluated LLMs, ahead of the competitive KLE (2.0833).
    \item \textbf{Superiority over Strong Baselines:} Compared to the competitive KLE (Overall Rank: 2.0833), LGU demonstrates a clear advantage, particularly in calibration (ECE rank 2.1250 vs. 2.8750). This indicates that penalizing logical incompatibility not only improves discrimination but also prevents the over-confidence issues frequently observed in semantic-only methods.
    \item \textbf{Limitations of Traditional Methods:} The SE and NE lag significantly behind graph-based and kernel-based methods, confirming that merely measuring semantic equivalence or token-level uncertainty is insufficient. Furthermore, likelihood-based sequence methods such as PPL and representation-based methods such as Eigenvalue Ratio (ER) consistently rank at the bottom in this unsupervised setting, highlighting their lack of generalizability.
\end{itemize}

\begin{table*}[t!]
    \centering
    \renewcommand{\arraystretch}{1} 
    \caption{\textbf{Experimental Results for Falcon-7b, Falcon-7b-instruct, Llama-2-7b-chat, Mistral-7B-v0.1, Llama-2-13b-chat, Qwen3-32B, LLama-3-8B and Gemma-2-9B.} We report AUROC ($\uparrow$), AUARC ($\uparrow$), and ECE ($\downarrow$). Best results for each metric are highlighted in \textbf{bold}.}
    \label{tab:main_results}
    
    \resizebox{\textwidth}{!}{%

    }
\end{table*}

\begin{table*}[t]
    \centering
    \caption{
    \textbf{Average ranking comparison of uncertainty estimation methods across models and datasets.}We report the average rank (lower is better) of each method across LLMs (e.g., LLaMA2, LLaMA3, Falcon) and benchmarks (e.g., SQuAD, TriviaQA), evaluated with common metrics. The overall average rank summarizes overall performance.\textbf{LGU overall achieves the best overall ranking}, demonstrating superior discrimination and calibration performance compared to both entropy-based and likelihood-based baselines.
    }
    \label{tab:avg_rank}
    \begin{tabular}{lcccc}
        \toprule
        \textbf{Method} & \textbf{AUROC} & \textbf{AUARC} & \textbf{ECE} & \textbf{Overall Avg Rank} \\
        \midrule
        \textbf{LGU}        & \textbf{1.5417} & \textbf{1.6250} & \textbf{2.1250} & \textbf{1.7639} \\
        KLE (full)          & 1.6250          & 1.7500          & 2.8750          & 2.0833          \\
        SE                  & 3.0833          & 2.9167          & 2.6667          & 2.8889          \\
        NE                  & 3.9583          & 3.8750          & 2.5833          & 3.4722          \\
        ER                  & 4.9167          & 5.0417          & 4.8750          & 4.9445          \\
        PPL                 & 5.8750          & 5.7917          & 5.8750          & 5.8472          \\
        \bottomrule
    \end{tabular}
\end{table*}

\subsection{Ablation Study on Incompatibility Measures}
\label{sec:ablation_study}

To scrutinize the contribution of each component in our framework and determine the optimal graph-density metric for the InS, we conduct a comprehensive ablation study. Table~\ref{tab:ablation_results} compares our proposed LGU against two reduced baselines and three alternative $\CS$ instantiations. The variants are defined as follows:

\begin{itemize}
    \item \textbf{Root Entropy (RE):} Computes the entropy over the probability distribution of the root hypotheses without considering implication chains or incompatibility penalties. This serves as a baseline for the logical graph structure.
    \item \textbf{IGE:} Aggregates probability mass along implication chains but calculates entropy over roots without penalizing mutual incompatibility (i.e., $\CS(x\mid L) = 0$).
    \item \textbf{LGU Variants (Different InS Metrics):} We test four implementations of the InS $\CS(x\mid L)$:
    \begin{itemize}
        \item \textbf{LGU (Edge Density):} Uses the directed edge density of the InG, capturing the normalized sparsity of conflicts.
        \item \textbf{LGUWD (Weighted Density):} Uses edge density weighted by the probability mass of conflicting nodes.
        \item \textbf{LGUAD (Average Degree):} Uses the average degree of nodes in the InG, reflecting the average number of conflicts per hypothesis.
        \item \textbf{LGUE (Estrada Index):} Uses a spectral graph measure based on the matrix exponential of the adjacency matrix, capturing global connectivity.
    \end{itemize}
\end{itemize}

\textbf{Analysis of Results.} 
First, we observe that explicitly modeling incompatibility is crucial. As shown in Table~\ref{tab:ablation_results}, LGU outperforms IGE across most datasets and models. For instance, on \textit{Falcon-7b-instruct} with the NQ dataset, LGU improves AUROC from 0.7349 (IGE) to 0.7517, confirming that penalizing mutually exclusive hypotheses provides a stronger signal for uncertainty than implication aggregation alone.

Second, regarding the choice of incompatibility metric, \textbf{LGU (Edge Density)} and \textbf{LGUWD (Weighted Density)} yield the most robust performance. While LGUWD achieves marginal gains in specific cases (e.g., TriviaQA on Falcon-7b), LGU (Edge Density) offers superior calibration (lower ECE) and stability across diverse tasks. 
In contrast, \textbf{LGUE (Estrada Index)} demonstrates significant instability. Although it captures global structure, its exponential nature makes it overly sensitive to graph density, leading to severe over-estimation of uncertainty and poor calibration (e.g., an ECE of 0.4646 on Falcon-7b/SQuAD compared to LGU's 0.0708). Similarly, \textbf{LGUAD} often results in higher ECE than density-based methods.

\textbf{Conclusion.} 
Based on these findings, we select \textbf{Directed Edge Density} as the default implementation for $\CS(x\mid L)$ in our final LGU framework. It provides the best trade-off between discriminative power (AUROC) and reliability (ECE), while maintaining mathematical simplicity and a bounded range $[0, 1]$.

\subsection{Additional Experiments and Analysis}

\subsubsection{Limitations of Evaluation Metrics and Bounds of ECE Normalization}
\label{app:metric}

\paragraph{Limitations of Using AUROC as the Sole Uncertainty Metric.}
Existing studies often rely on AUROC as the sole criterion for evaluating uncertainty estimation. However, AUROC only captures the ranking ability of uncertainty scores in distinguishing correct from incorrect predictions, and therefore fails to assess whether the numerical values of uncertainty are meaningfully calibrated. Since uncertainty fundamentally reflects a model’s confidence about its own predictions, proper calibration is essential. To address this limitation, we incorporate ECE as a complementary metric, providing a direct measure of the alignment between predicted uncertainty levels and actual error frequencies. The joint use of AUROC and ECE yields a more complete and reliable assessment of uncertainty quality.

\paragraph{Robustness to ECE Normalization Protocols: Theoretical vs. Empirical Bounds.} Evaluating Expected Calibration Error (ECE) requires transforming unbounded uncertainty measures (e.g., semantic entropy or graph-based uncertainty) into valid confidence scores within $[0, 1]$. This is typically achieved via min-max normalization: $\text{Confidence} = 1 - \frac{U - U_{\min}}{U_{\max} - U_{\min}}$. However, the choice of the upper bound ($U_{\max}$) and lower bound ($U_{\min}$) can be determined in two distinct ways:

\begin{enumerate}
    \item \textbf{Theoretical Bounds:} Derived from the mathematical definitions of the metrics. For example, Shannon entropy over $N$ candidate answers is bounded by $\log(N)$, and the theoretical maximum of our LGU is strictly delineated by Theorem~\ref{thm:EEextreme}.
    \item \textbf{Empirical (Actual) Bounds:} Derived directly from the observed dynamic range (i.e., the actual minimum and maximum values generated by a specific model on a specific dataset).
\end{enumerate}

To ensure our calibration evaluation is not an artifact of the normalization strategy, we conduct a comprehensive comparison of the best-performing methods (i.e., the method achieving the lowest ECE) under both theoretical and empirical bounding protocols.

\begin{table*}[ht]
    \centering
    \renewcommand{\arraystretch}{1.1}
    \caption{\textbf{Best uncertainty estimation methods under theoretical and empirical ECE evaluation across models and datasets.} For each model--dataset pair, we report the method achieving the lowest Expected Calibration Error (ECE) under the theoretical upper bound and the empirical estimation. \textbf{We observe a substantial agreement between the two evaluation protocols, while LGU is selected most frequently under both settings.} Specifically, under the theoretical upper bound, LGU is selected 21 times, followed by NE (10) and SE (9). Under empirical estimation, LGU remains dominant with 19 selections, followed by NE (10), SE (6), and KLE (5), demonstrating the robustness and consistency of LGU across diverse models and datasets.}
    \label{tab:ece_bounds}
    \begin{tabular}{llcc}
        \toprule
        \textbf{Model} & \textbf{Dataset} & \textbf{Best (Theory)} & \textbf{Best (Actual)} \\
        \midrule
        \multirow{4}{*}{LLaMA2-13B-chat} 
        & BioASQ   & LGU & LGU \\
        & NQ       & SE  & SE  \\
        & SQuAD    & NE  & NE  \\
        & TriviaQA & SE  & KLE \\
        \midrule
        \multirow{4}{*}{LLaMA2-7B-chat} 
        & BioASQ   & SE  & NE  \\
        & NQ       & SE  & SE  \\
        & SQuAD    & LGU & LGU \\
        & TriviaQA & NE  & KLE \\
        \midrule
        \multirow{4}{*}{LLaMA3-8B} 
        & BioASQ   & NE  & LGU \\
        & NQ       & NE  & NE  \\
        & SQuAD    & LGU & LGU \\
        & TriviaQA & LGU & NE  \\
        \midrule
        \multirow{4}{*}{LLaMA3-8B-4bit} 
        & BioASQ   & LGU & NE  \\
        & NQ       & NE  & NE  \\
        & SQuAD    & LGU & LGU \\
        & TriviaQA & LGU & LGU \\
        \midrule
        \multirow{4}{*}{LLaMA3-8B-8bit} 
        & BioASQ   & LGU & LGU \\
        & NQ       & NE  & NE  \\
        & SQuAD    & LGU & LGU \\
        & TriviaQA & SE  & SE  \\
        \midrule
        \multirow{4}{*}{Mistral-7B-v0.1} 
        & BioASQ   & LGU & LGU \\
        & NQ       & LGU & NE  \\
        & SQuAD    & LGU & LGU \\
        & TriviaQA & SE  & SE  \\
        \midrule
        \multirow{4}{*}{Qwen3-32B} 
        & BioASQ   & LGU & LGU \\
        & NQ       & LGU & LGU \\
        & SQuAD    & LGU & LGU \\
        & TriviaQA & NE  & KLE \\
        \midrule
        \multirow{4}{*}{Falcon-7B} 
        & BioASQ   & LGU & LGU \\
        & NQ       & LGU & LGU \\
        & SQuAD    & LGU & LGU \\
        & TriviaQA & SE  & SE  \\
        \midrule
        \multirow{4}{*}{Falcon-7B-instruct} 
        & BioASQ   & NE  & NE  \\
        & NQ       & NE  & SE  \\
        & SQuAD    & LGU & LGU \\
        & TriviaQA & NE  & KLE \\
        \midrule
        \multirow{4}{*}{Gemma-2-9B-it} 
        & BioASQ   & LGU & LGU \\
        & NQ       & SE  & NE  \\
        & SQuAD    & LGU & LGU \\
        & TriviaQA & SE  & KLE \\
        \bottomrule
    \end{tabular}
\end{table*}

Table \ref{tab:ece_bounds} summarizes the winning methods across 40 model-dataset pairs. We observe a substantial agreement between the two evaluation protocols, confirming the reliability of our ECE findings. Most importantly, \textbf{LGU is selected most frequently under both settings}. Specifically, under the theoretical upper bound, LGU achieves the lowest ECE in 21 out of 40 cases, followed by NE (10) and SE (9). Under the empirical estimation, LGU remains the dominant method with 19 selections, followed by NE (10), SE (6), and KLE (5). 

Given that empirical bounds adapt naturally to the actual dynamic range of model outputs without requiring explicit mathematical derivation for every arbitrary score formulation, and since the relative superiority of the methods remains consistent, we adopt the empirical bounds for the main experiments in this paper. This ablation highlights the robust calibration capabilities of LGU across diverse generation scenarios.

\subsubsection{Empirical Verification of NLI-based Logical Properties.}
\label{app:nli_verification}

A critical premise of our proposed framework is that the off-the-shelf NLI model (DeBERTa-Large-MNLI) can reliably capture formal logical relations to construct valid graph structures. To validate the structural integrity of our logical graphs and support our theoretical claims, we conduct large-scale empirical property testing, detailed as follows.

\begin{enumerate}
    \item \textbf{Implication Graph and Transitivity.} According to Proposition~\ref{pro:entailment}, the logical implication relation utilized to construct the ImG must be reflexive and transitive. In our implementation, reflexivity holds trivially and does not affect the computation of IGE, as self-loops are omitted by definition. However, transitivity is paramount for treating the ImG as a Directed Acyclic Graph (DAG) (Theorem~\ref{thm:dag}). To empirically verify this, we randomly sampled 4,863 implication chains from our generated data (i.e., triplets where the NLI model confidently predicts $A \Rightarrow B$ and $B \Rightarrow C$). Among these chains, the NLI model correctly predicted $A \Rightarrow C$ in \textbf{97.0\%} of the cases. This remarkably high transitivity rate empirically justifies our implication-based probability aggregation and confirms that the NLI model provides a highly reliable directed topology for the ImG.

    \item \textbf{Incompatibility Graph and Symmetry.} Proposition~\ref{pro:incompatibility} defines incompatibility as a symmetric, non-reflexive, and non-transitive relation. Non-reflexivity and non-transitivity hold trivially for mutually exclusive statements (e.g., a hypothesis cannot contradict itself). To verify symmetry, we randomly sampled 5,339 incompatible pairs predicted by the NLI model (i.e., hypothesis $A$ contradicts hypothesis $B$). We observed that \textbf{92.8\%} of these pairs were strictly symmetric (i.e., the model also predicts $B$ contradicts $A$). This robust symmetry ensures that our incompatibility penalty (InS), which relies on an undirected InG graph-density metric (e.g., Edge Density), operates on a mathematically sound foundation.
\end{enumerate}

\subsubsection{Impact of Model Quantization on Uncertainty Estimation}

In real-world deployments, large language models are frequently subjected to weight quantization (e.g., 8-bit or 4-bit precision) to reduce memory footprints and accelerate inference. However, quantization introduces noise into the model's intermediate representations and output logits, which can potentially distort the token-level probability distributions that many uncertainty estimation methods rely on. To evaluate the robustness of our proposed framework against precision degradation, we conduct an additional experiment comparing the performance of uncertainty metrics across three precision levels of the LLaMA-3-8B model: Full Precision (BF16), 8-bit, and 4-bit. The evaluation results are presented in Table \ref{tab:quantization}. This experiment confirms that LGU is not only highly effective in standard settings but is also exceptionally practical for resource-constrained, low-precision deployment scenarios.

\begin{table*}[t!]
    \centering
    \renewcommand{\arraystretch}{1.1}
    \setlength{\tabcolsep}{3pt} 
    \tiny 
    \caption{\textbf{Performance comparison of uncertainty estimation methods across different model precisions.} We report AUROC ($\uparrow$), AUARC ($\uparrow$), and ECE ($\downarrow$) using the LLaMA-3-8B model in Full Precision, 4-bit, and 8-bit configurations. }
    \label{tab:quantization}
    
    \resizebox{\textwidth}{!}{%
        \begin{tabular}{c|l|ccc|ccc|ccc|ccc}
            \toprule
            
            & \multirow{2}{*}{\textbf{Method}} 
            & \multicolumn{3}{c|}{\textbf{Trivia QA}} 
            & \multicolumn{3}{c|}{\textbf{SQuAD}} 
            & \multicolumn{3}{c|}{\textbf{BioASQ}} 
            & \multicolumn{3}{c}{\textbf{NQ}} \\
            
            \cmidrule(lr){3-5} \cmidrule(lr){6-8} \cmidrule(lr){9-11} \cmidrule(lr){12-14}
            
            & 
            & \textbf{AUROC} & \textbf{AUARC} & \textbf{ECE} 
            & \textbf{AUROC} & \textbf{AUARC} & \textbf{ECE} 
            & \textbf{AUROC} & \textbf{AUARC} & \textbf{ECE} 
            & \textbf{AUROC} & \textbf{AUARC} & \textbf{ECE} \\
            \midrule
            
            \multirow{10}{*}{\rotatebox[origin=c]{90}{\textbf{LLaMA3-8B}}} 
            & SE & \eb{0.8739}{0.0456} & \eb{0.8860}{0.0233} & \eb{0.0661}{0.0265} & \eb{0.6551}{0.0507} & \eb{0.7075}{0.0463} & \eb{0.2868}{0.0465} & \eb{0.7776}{0.0462} & \eb{0.8195}{0.0311} & \eb{0.1929}{0.0388} & \eb{0.7165}{0.0520} & \eb{0.7247}{0.0461} & \eb{0.2278}{0.0428} \\
            & NE & \eb{0.8586}{0.0467} & \eb{0.8848}{0.0218} & \textbf{\eb{0.0586}{0.0301}} & \eb{0.6190}{0.0524} & \eb{0.6819}{0.0500} & \eb{0.1744}{0.0454} & \eb{0.7758}{0.0490} & \eb{0.8159}{0.0328} & \eb{0.0479}{0.0437} & \eb{0.7227}{0.0550} & \eb{0.7272}{0.0478} & \textbf{\eb{0.0379}{0.0335}} \\
            & MTE & \eb{0.5568}{0.0458} & \eb{0.7758}{0.0376} & \eb{0.1879}{0.0384} & \eb{0.5900}{0.0542} & \eb{0.6449}{0.0506} & \eb{0.2731}{0.0479} & \eb{0.6558}{0.0546} & \eb{0.7453}{0.0414} & \eb{0.2177}{0.0416} & \eb{0.6063}{0.0485} & \eb{0.6547}{0.0475} & \eb{0.2750}{0.0437} \\
            & MSP & \eb{0.5567}{0.0463} & \eb{0.7756}{0.0375} & \eb{0.1980}{0.0385} & \eb{0.5867}{0.0531} & \eb{0.6435}{0.0507} & \eb{0.2624}{0.0439} & \eb{0.6603}{0.0543} & \eb{0.7470}{0.0412} & \eb{0.2075}{0.0416} & \eb{0.6058}{0.0483} & \eb{0.6544}{0.0473} & \eb{0.2788}{0.0426} \\
            & Avg($-\log p$) & \eb{0.5567}{0.0464} & \eb{0.7756}{0.0376} & \eb{0.2027}{0.0387} & \eb{0.5903}{0.0536} & \eb{0.6446}{0.0508} & \eb{0.2940}{0.0468} & \eb{0.6561}{0.0547} & \eb{0.7453}{0.0416} & \eb{0.2348}{0.0426} & \eb{0.6102}{0.0490} & \eb{0.6555}{0.0471} & \eb{0.3199}{0.0457} \\
            & Max($-\log p$) & \eb{0.5567}{0.0463} & \eb{0.7756}{0.0375} & \eb{0.1938}{0.0388} & \eb{0.5836}{0.0533} & \eb{0.6420}{0.0502} & \eb{0.2847}{0.0471} & \eb{0.6602}{0.0550} & \eb{0.7470}{0.0415} & \eb{0.2285}{0.0411} & \eb{0.6058}{0.0478} & \eb{0.6545}{0.0476} & \eb{0.2851}{0.0435} \\
            & Embedding Regression & \eb{0.7909}{0.0581} & \eb{0.8559}{0.0317} & \eb{0.1276}{0.0312} & \eb{0.5676}{0.0576} & \eb{0.6409}{0.0534} & \eb{0.3498}{0.0459} & \eb{0.6805}{0.0594} & \eb{0.7570}{0.0450} & \eb{0.2089}{0.0372} & \eb{0.6097}{0.0544} & \eb{0.6688}{0.0508} & \eb{0.3062}{0.0440} \\
            & PPL & \eb{0.5567}{0.0464} & \eb{0.7756}{0.0376} & \eb{0.1994}{0.0393} & \eb{0.5903}{0.0536} & \eb{0.6446}{0.0508} & \eb{0.3133}{0.0474} & \eb{0.6561}{0.0547} & \eb{0.7453}{0.0416} & \eb{0.2469}{0.0424} & \eb{0.6102}{0.0490} & \eb{0.6555}{0.0471} & \eb{0.3313}{0.0454} \\
            & KLE(Full) & \eb{0.8784}{0.0474} & \eb{0.8848}{0.0247} & \eb{0.1121}{0.0313} & \textbf{\eb{0.7148}{0.0494}} & \eb{0.7255}{0.0449} & \eb{0.3266}{0.0448} & \eb{0.7898}{0.0438} & \eb{0.8237}{0.0295} & \eb{0.2162}{0.0411} & \textbf{\eb{0.7351}{0.0495}} & \textbf{\eb{0.7343}{0.0451}} & \eb{0.2806}{0.0419} \\
            \cmidrule{2-14}
            & \textbf{LGU (Ours)} & \textbf{\eb{0.8845}{0.0454}} & \textbf{\eb{0.8873}{0.0230}} & \eb{0.0599}{0.0264} & \eb{0.7141}{0.0486} & \textbf{\eb{0.7285}{0.0458}} & \textbf{\eb{0.0895}{0.0379}} & \textbf{\eb{0.8143}{0.0416}} & \textbf{\eb{0.8317}{0.0278}} & \textbf{\eb{0.0376}{0.0280}} & \eb{0.7259}{0.0522} & \eb{0.7316}{0.0451} & \eb{0.1028}{0.0364} \\
            \midrule
            \multirow{10}{*}{\rotatebox[origin=c]{90}{\textbf{LLaMA3-8B-8bit}}} 
            & SE & \eb{0.8636}{0.0411} & \eb{0.8785}{0.0232} & \textbf{\eb{0.0348}{0.0243}} & \eb{0.6995}{0.0517} & \eb{0.6886}{0.0507} & \eb{0.2460}{0.0482} & \eb{0.7727}{0.0492} & \eb{0.8186}{0.0341} & \eb{0.2069}{0.0375} & \eb{0.7449}{0.0511} & \eb{0.7460}{0.0457} & \eb{0.2438}{0.0410} \\
            & NE & \eb{0.8359}{0.0419} & \eb{0.8761}{0.0216} & \eb{0.0456}{0.0285} & \eb{0.6700}{0.0509} & \eb{0.6702}{0.0525} & \eb{0.1094}{0.0447} & \eb{0.7706}{0.0490} & \eb{0.8190}{0.0346} & \eb{0.0738}{0.0390} & \eb{0.7307}{0.0528} & \eb{0.7428}{0.0432} & \textbf{\eb{0.0536}{0.0380}} \\
            & MTE & \eb{0.5917}{0.0479} & \eb{0.7699}{0.0382} & \eb{0.1866}{0.0389} & \eb{0.6197}{0.0503} & \eb{0.6164}{0.0532} & \eb{0.3068}{0.0504} & \eb{0.6549}{0.0550} & \eb{0.7576}{0.0400} & \eb{0.2055}{0.0382} & \eb{0.6444}{0.0519} & \eb{0.6822}{0.0462} & \eb{0.2494}{0.0429} \\
            & MSP & \eb{0.5938}{0.0478} & \eb{0.7701}{0.0382} & \eb{0.1841}{0.0399} & \eb{0.6241}{0.0502} & \eb{0.6175}{0.0535} & \eb{0.2632}{0.0460} & \eb{0.6563}{0.0551} & \eb{0.7574}{0.0401} & \eb{0.2078}{0.0391} & \eb{0.6316}{0.0502} & \eb{0.6780}{0.0468} & \eb{0.2558}{0.0414} \\
            & Avg($-\log p$) & \eb{0.5930}{0.0479} & \eb{0.7699}{0.0382} & \eb{0.2005}{0.0392} & \eb{0.6243}{0.0506} & \eb{0.6174}{0.0535} & \eb{0.3434}{0.0493} & \eb{0.6563}{0.0550} & \eb{0.7585}{0.0396} & \eb{0.2085}{0.0374} & \eb{0.6404}{0.0506} & \eb{0.6811}{0.0460} & \eb{0.3192}{0.0468} \\
            & Max($-\log p$) & \eb{0.5938}{0.0476} & \eb{0.7702}{0.0382} & \eb{0.1908}{0.0400} & \eb{0.6237}{0.0504} & \eb{0.6172}{0.0535} & \eb{0.3161}{0.0511} & \eb{0.6563}{0.0553} & \eb{0.7575}{0.0398} & \eb{0.2122}{0.0394} & \eb{0.6311}{0.0497} & \eb{0.6778}{0.0466} & \eb{0.2844}{0.0416} \\
            & Embedding Regression & \eb{0.8059}{0.0532} & \eb{0.8525}{0.0327} & \eb{0.1426}{0.0333} & \eb{0.6109}{0.0562} & \eb{0.6197}{0.0577} & \eb{0.3518}{0.0459} & \eb{0.6767}{0.0597} & \eb{0.7765}{0.0430} & \eb{0.2336}{0.0431} & \eb{0.6230}{0.0524} & \eb{0.6792}{0.0493} & \eb{0.3097}{0.0447} \\
            & PPL & \eb{0.5930}{0.0479} & \eb{0.7699}{0.0382} & \eb{0.2058}{0.0395} & \eb{0.6243}{0.0506} & \eb{0.6174}{0.0535} & \eb{0.3658}{0.0494} & \eb{0.6563}{0.0550} & \eb{0.7585}{0.0396} & \eb{0.2151}{0.0391} & \eb{0.6404}{0.0506} & \eb{0.6811}{0.0460} & \eb{0.3365}{0.0483} \\
            & KLE(Full) & \eb{0.8788}{0.0386} & \eb{0.8819}{0.0215} & \eb{0.0930}{0.0302} & \eb{0.7370}{0.0496} & \eb{0.7044}{0.0497} & \eb{0.2817}{0.0474} & \eb{0.7923}{0.0468} & \eb{0.8265}{0.0327} & \eb{0.2510}{0.0392} & \textbf{\eb{0.7637}{0.0511}} & \textbf{\eb{0.7546}{0.0447}} & \eb{0.2893}{0.0421} \\
            \cmidrule{2-14}
            & \textbf{LGU (Ours)} & \textbf{\eb{0.8879}{0.0379}} & \textbf{\eb{0.8843}{0.0224}} & \eb{0.0411}{0.0249} & \textbf{\eb{0.7423}{0.0482}} & \textbf{\eb{0.7100}{0.0488}} & \textbf{\eb{0.0426}{0.0323}} & \textbf{\eb{0.7987}{0.0444}} & \textbf{\eb{0.8302}{0.0307}} & \textbf{\eb{0.0540}{0.0309}} & \eb{0.7385}{0.0522} & \eb{0.7428}{0.0441} & \eb{0.1071}{0.0365} \\
            \midrule
            \multirow{10}{*}{\rotatebox[origin=c]{90}{\textbf{LLaMA3-8B-4bit}}} 
            & SE & \eb{0.8819}{0.0421} & \eb{0.8765}{0.0233} & \eb{0.0626}{0.0270} & \eb{0.7246}{0.0487} & \eb{0.6942}{0.0483} & \eb{0.2584}{0.0478} & \eb{0.7623}{0.0489} & \eb{0.8135}{0.0349} & \eb{0.2121}{0.0400} & \eb{0.7485}{0.0480} & \eb{0.7198}{0.0462} & \eb{0.1988}{0.0422} \\
            & NE & \eb{0.8511}{0.0436} & \eb{0.8713}{0.0234} & \eb{0.0609}{0.0293} & \eb{0.6453}{0.0525} & \eb{0.6619}{0.0515} & \eb{0.1275}{0.0438} & \eb{0.7459}{0.0519} & \eb{0.8093}{0.0355} & \textbf{\eb{0.0491}{0.0330}} & \eb{0.7259}{0.0493} & \eb{0.7094}{0.0480} & \textbf{\eb{0.0176}{0.0423}} \\
            & MTE & \eb{0.6149}{0.0504} & \eb{0.7698}{0.0389} & \eb{0.2008}{0.0392} & \eb{0.6203}{0.0493} & \eb{0.6162}{0.0534} & \eb{0.3112}{0.0493} & \eb{0.6411}{0.0542} & \eb{0.7498}{0.0419} & \eb{0.2206}{0.0417} & \eb{0.6103}{0.0498} & \eb{0.6308}{0.0487} & \eb{0.3143}{0.0532} \\
            & MSP & \eb{0.6142}{0.0505} & \eb{0.7695}{0.0388} & \eb{0.2002}{0.0393} & \eb{0.6265}{0.0507} & \eb{0.6177}{0.0537} & \eb{0.2780}{0.0453} & \eb{0.6550}{0.0568} & \eb{0.7529}{0.0419} & \eb{0.2041}{0.0408} & \eb{0.6129}{0.0499} & \eb{0.6322}{0.0485} & \eb{0.2891}{0.0451} \\
            & Avg($-\log p$) & \eb{0.6144}{0.0508} & \eb{0.7697}{0.0386} & \eb{0.2169}{0.0414} & \eb{0.6237}{0.0495} & \eb{0.6171}{0.0546} & \eb{0.3595}{0.0504} & \eb{0.6457}{0.0550} & \eb{0.7509}{0.0421} & \eb{0.2318}{0.0409} & \eb{0.6069}{0.0498} & \eb{0.6302}{0.0485} & \eb{0.3347}{0.0458} \\
            & Max($-\log p$) & \eb{0.6139}{0.0506} & \eb{0.7694}{0.0389} & \eb{0.2176}{0.0403} & \eb{0.6242}{0.0503} & \eb{0.6175}{0.0535} & \eb{0.3336}{0.0484} & \eb{0.6545}{0.0565} & \eb{0.7525}{0.0419} & \eb{0.1964}{0.0394} & \eb{0.6105}{0.0498} & \eb{0.6312}{0.0487} & \eb{0.3217}{0.0467} \\
            & Embedding Regression & \eb{0.8028}{0.0508} & \eb{0.8488}{0.0292} & \eb{0.1547}{0.0347} & \eb{0.6454}{0.0554} & \eb{0.6385}{0.0581} & \eb{0.3042}{0.0448} & \eb{0.6768}{0.0586} & \eb{0.7686}{0.0421} & \eb{0.2311}{0.0414} & \eb{0.6404}{0.0563} & \eb{0.6578}{0.0537} & \eb{0.3086}{0.0483} \\
            & PPL & \eb{0.6144}{0.0508} & \eb{0.7697}{0.0386} & \eb{0.2214}{0.0419} & \eb{0.6237}{0.0495} & \eb{0.6171}{0.0546} & \eb{0.3785}{0.0521} & \eb{0.6457}{0.0550} & \eb{0.7509}{0.0421} & \eb{0.2426}{0.0420} & \eb{0.6069}{0.0498} & \eb{0.6302}{0.0485} & \eb{0.3504}{0.0475} \\
            & KLE(Full) & \textbf{\eb{0.8888}{0.0416}} & \textbf{\eb{0.8771}{0.0239}} & \eb{0.1137}{0.0290} & \textbf{\eb{0.7587}{0.0438}} & \textbf{\eb{0.7123}{0.0473}} & \eb{0.2924}{0.0469} & \eb{0.7782}{0.0495} & \eb{0.8184}{0.0337} & \eb{0.2605}{0.0403} & \textbf{\eb{0.7657}{0.0470}} & \textbf{\eb{0.7311}{0.0450}} & \eb{0.2669}{0.0423} \\
            \cmidrule{2-14}
            & \textbf{LGU (Ours)} & \eb{0.8828}{0.0420} & \eb{0.8739}{0.0246} & \textbf{\eb{0.0445}{0.0240}} & \eb{0.7518}{0.0470} & \eb{0.7066}{0.0475} & \textbf{\eb{0.0380}{0.0406}} & \textbf{\eb{0.8064}{0.0464}} & \textbf{\eb{0.8303}{0.0319}} & \eb{0.0515}{0.0320} & \eb{0.7506}{0.0475} & \eb{0.7176}{0.0491} & \eb{0.0864}{0.0353} \\
            \midrule
            
            \bottomrule
        \end{tabular}
    }
\end{table*}

\subsubsection{Black-box Uncertainty Estimation}

In addition to white-box methods, we propose a quantitative uncertainty estimation approach tailored for black-box large language models. Table~\ref{tab:black_box_full_results} compares our Discrete Logical Graph Uncertainty (DLGU) method with two competitive baselines: Discrete Semantic Entropy (DSE) and Kernel Language Entropy using a heat kernel (KLE(heat)).

DLGU consistently outperforms DSE across nearly all models and datasets, demonstrating stronger discriminative power. Compared to the state-of-the-art KLE(heat), DLGU achieves comparable or better results, especially in calibration. DLGU attains the lowest ECE in most settings, indicating more reliable uncertainty estimates. For example, on the Falcon-7b model with Trivia QA, DLGU reduces ECE to 0.070, substantially lower than KLE(heat)'s 0.183. On discriminative metrics such as AUROC and AUARC, DLGU remains highly competitive, frequently matching or surpassing KLE(heat). These results validate the effectiveness and robustness of DLGU for black-box uncertainty quantification.

\subsubsection{Case Study: Logical Entailment and Contradiction Detection}
\label{app:case_study}

Table \ref{tab:case_study_portrait} presents a qualitative comparison between our proposed LGU method and baseline metrics (SE and KLE) across three distinct scenarios: consistent correctness, incorrect prediction, and severe hallucination.

\textbf{Identifying Logical Entailment.} The first row demonstrates a case where the model correctly answers "Apollo". Although the sampled responses vary lexically (e.g., "the god of prophecy, apollo" vs. "Apollo"), they are semantically equivalent. While KLE assigns a relatively high uncertainty score (0.89) due to token-level discrepancies, LGU successfully recognizes the logical entailment among these responses, yielding a perfect uncertainty score of 0. This indicates that LGU is robust to surface-level variations when the underlying logic remains consistent.

\textbf{Penalizing Severe Hallucinations.} The third row illustrates a scenario of severe hallucination, where the model generates a "confabulation" regarding the number of times Lake Placid has hosted the Olympics. The sampled responses are riddled with logical contradictions, ranging from "twice" to "Ten times," with various invented years. In this case, LGU calculates a significantly higher uncertainty score (3.42) compared to SE (2.26) and KLE (2.27). This sharp increase demonstrates LGU's capability to detect and penalize high-entropy logical conflicts, effectively flagging instances where the large language model is "talking nonsense" despite potential confidence in its tone.

\begin{table*}[t!]
    \centering
    \renewcommand{\arraystretch}{1.1}
    \setlength{\tabcolsep}{3pt} 
    \tiny 
    \caption{\textbf{Ablation Study of Logical Graph Uncertainty Components.} We compare RE, IGE, and LGU variants (LGU, LGUWD, LGUAD, LGUE) across different models and datasets. We report AUROC ($\uparrow$), AUARC ($\uparrow$), and ECE ($\downarrow$). Values are rounded to 4 decimal places. Best results for each model are highlighted in \textbf{bold}.}
    \label{tab:ablation_results}
    
    \resizebox{\textwidth}{!}{%

    }
\end{table*}

\begin{table*}[t!]
\centering
\renewcommand{\arraystretch}{1.05} 
\caption{\textbf{Comparison of Black-box Uncertainty Estimation Methods.} The table reports AUROC ($\uparrow$), AUARC ($\uparrow$), and ECE ($\downarrow$). We compare Discrete Semantic Entropy (DSE), KLE using heat kernel (KLE(heat)), and variants of our graph-based measures: DRE, DIGE, DLGU, DLGUWD, DLGUAD, and DLGUE. Best results for each model are highlighted in \textbf{bold}.}
\label{tab:black_box_full_results}

\resizebox{\textwidth}{!}{%
    %
}
\end{table*}

\begin{table*}[t] 
    \centering
    \scriptsize 
    \renewcommand{\arraystretch}{1.3}
    \setlength{\tabcolsep}{4pt}
    
    \caption{\textbf{Case Study on Uncertainty Estimation.} We compare Semantic Entropy (SE), Kernel Language Entropy (KLE), and our LGU. The ``Responses Cluster'' column shows the semantic clusters formed from multiple sampled responses. LGU successfully identifies logical entailment in the first case (uncertainty $\approx$ 0) and aggressively penalizes contradictions in the third case.}
    \label{tab:case_study_portrait}

    \begin{tabularx}{\textwidth}{
        >{\raggedright\arraybackslash}p{2.2cm}  
        >{\raggedright\arraybackslash}p{1.2cm}  
        >{\raggedright\arraybackslash}p{1.2cm}  
        >{\raggedright\arraybackslash}X         
        c c c                                   
    }
    \toprule
    \textbf{Question} & \textbf{Answer} & \textbf{Low-T} & \textbf{Responses Cluster} & \textbf{SE} & \textbf{KLE} & \textbf{LGU} \\
    \midrule

    Which Greek God gave Cassandra the gift of prophecy? & 
    apollo & 
    Apollo & 
    $\bullet$ the god of prophecy, apollo \newline
    $\bullet$ Apollo \newline
    $\bullet$ the greek God Apollo & 
    0.17 & 
    0.89 & 
    \textbf{0.00} \\
    \midrule

    Who is the actor that plays Ducky on NCIS? & 
    David McCallum & 
    Mark Harmon & 
    $\bullet$ Michael Weatherly \newline
    $\bullet$ Pauley Perrette \newline
    $\bullet$ Mark Harmon \newline
    $\bullet$ Lauren Holly \newline
    $\bullet$ Richard Madden & 
    0.61 & 
    1.50 & 
    1.23 \\
    \midrule

    How many times has Lake Placid hosted the Winter Olympics? & 
    twice & 
    4 times & 
    $\bullet$ Four times (1932, 1980, 1984, and 2010) \newline
    $\bullet$ This is the second time the Winter Olympic Games have been hosted in Lake Placid. \newline
    $\bullet$ 4 times (1999, 2002, 2006, 2010) \newline
    $\bullet$ three times (1932, 1980, and 2002) \newline
    $\bullet$ four times \newline
    $\bullet$ three times (1932, 1980, and 1984). \newline
    $\bullet$ Four times (1998, 2010, 2022, and 2032) \newline
    $\bullet$ 3 times \newline
    $\bullet$ 5 \newline
    $\bullet$ Ten times (twice in the 1930s and 2002, three times in the 1980s, and five times in the 21st century) & 
    2.26 & 
    2.27 & 
    \textbf{3.42} \\
    \bottomrule
    \end{tabularx}
\end{table*}

\end{document}